\newcommand{\norm}[1]{\left\Vert#1\right\Vert}
\newcommand{\abs}[1]{\left\vert#1\right\vert}
\newcommand{\set}[1]{\left\{#1\right\}}
\newcommand{\parr}[1]{\left (#1\right )}
\newcommand{\brac}[1]{\left [#1\right ]}
\newcommand{\Real}{\mathbb R}
\newcommand{\eps}{\varepsilon}
\newcommand{\too}{\rightarrow}
\definecolor{mygray}{gray}{0.95}
\newcommand{\eg}{{e.g.}}
\newcommand{\ie}{{i.e.}}
\def\eqref#1{equation~\ref{#1}}
\def\1{\bm{1}}
\def\eps{{\epsilon}}
\DeclareMathAlphabet{\mathsfit}{\encodingdefault}{\sfdefault}{m}{sl}
\SetMathAlphabet{\mathsfit}{bold}{\encodingdefault}{\sfdefault}{bx}{n}
\def\gA{{\mathcal{A}}}
\def\gB{{\mathcal{B}}}
\def\gE{{\mathcal{E}}}
\def\gL{{\mathcal{L}}}
\def\gN{{\mathcal{N}}}
\def\gP{{\mathcal{P}}}
\def\gT{{\mathcal{T}}}
\newcommand{\E}{\mathbb{E}}
\newcommand{\R}{\mathbb{R}}
\newcolumntype{C}[1]{>{\Centering}m{#1}}
\newcolumntype{Z}[1]{>{\Left}m{#1}}
\newcommand*\rot[1]{\rotatebox{90}{#1}}
\theoremstyle{plain}
\newtheorem{theorem}{Theorem}[section]
\newtheorem{lemma}[theorem]{Lemma}
\newtheorem{corollary}[theorem]{Corollary}
\theoremstyle{definition}
\theoremstyle{remark}
\newtheorem*{rep@theorem}{\rep@title}
\newcommand{\newreptheorem}[2]{%
\newenvironment{rep#1}[1]{%
 \def\rep@title{\textbf{#2} \ref{##1}}%
 \begin{rep@theorem}}%
 {\end{rep@theorem}}}
\icmltitlerunning{On Kinetic Optimal Probability Paths for Generative Models}
\begin{document}

\twocolumn[
\icmltitle{On Kinetic Optimal Probability Paths for Generative Models}



\icmlsetsymbol{equal}{*}

\begin{icmlauthorlist}
\icmlauthor{Neta Shaul}{yyy}
\icmlauthor{Ricky T. Q. Chen}{comp}
\icmlauthor{Maximilian Nickel}{comp}
\icmlauthor{Matt Le}{comp}
\icmlauthor{Yaron Lipman}{comp,yyy}
\end{icmlauthorlist}

\icmlaffiliation{yyy}{Weizmann Institute of Science}
\icmlaffiliation{comp}{Meta AI (FAIR)}

\icmlcorrespondingauthor{Neta Shaul}{Neta.Shaul@weizmann.ac.il}

\icmlkeywords{Machine Learning, ICML}

\vskip 0.3in
]



\printAffiliationsAndNotice{}  

\begin{abstract}
Recent successful generative models are trained by fitting a neural network to an a-priori defined tractable probability density path taking noise to training examples. In this paper we investigate the space of Gaussian probability paths, which includes diffusion paths as an instance, and look for an optimal member in some useful sense. In particular, minimizing the Kinetic Energy (KE) of a path is known to make particles' trajectories simple, hence easier to sample, and empirically improve performance in terms of likelihood of unseen data and sample generation quality. We investigate Kinetic Optimal (KO) Gaussian paths and offer the following observations: (i) We show the KE takes a simplified form on the space of Gaussian paths, where the data is incorporated only through a single, one dimensional scalar function, called the \emph{data separation function}. (ii) We characterize the KO solutions with a one dimensional ODE. (iii) We approximate data-dependent KO paths by approximating the data separation function and minimizing the KE. (iv) We prove that the data separation function converges to $1$ in the general case of arbitrary normalized dataset consisting of $n$ samples in $d$ dimension as $n/\sqrt{d}\too 0$. A consequence of this result is that the Conditional Optimal Transport (Cond-OT) path becomes \emph{kinetic optimal} as $n/\sqrt{d}\too 0$. We further support this theory with empirical experiments on ImageNet.
\end{abstract}

\section{Introduction}

In recent years, deep generative models have become very powerful both in terms of sample quality and density estimation \cite{Rombach2021stablediffusion,gafni2022make}. The recent revolution is mainly led by a class of time-dependent generative models which make use of predefined probability paths $p_t$, constituting a process interpolating between the target (data) distribution $q$ and the prior (noise) distribution $p$. The training procedure of these models can be generally described as a regression task of a neural network, $v_{t}(x;\theta)$, to some vector quantity, $u_t$, that allows sampling from the interpolation process:
\begin{equation}\label{e:loss}
    \gL(\theta) = \E_{p_t(x)} \ell(v_{t}(x;\theta), u_t(x)),
\end{equation}
where $\ell(v,u)$ is some cost function, and $\gL$ is the total loss objective that is stochastically estimated and optimized during training. A notable family of algorithms that fits into this framework are diffusion models \cite{ho2020denoising,song2020score} as well as the recent Flow-Matching models, \cite{lipman2022flow,albergo2022si,liu2022flow,neklyudov2022action} generalizing the principles used in diffusion models training and apply them to simulation-free continuous normalizing flows (CNFs, \cite{chen2018neural}) training. In particular, \cite{song2020score,lipman2022flow} show that diffusion models also fall into the simulation-free CNF framework. CNFs parameterize the generation process via a flow vector field $u_t$, which defines the probability path $p_t$. 

In order to make the loss in \eqref{e:loss} tractable, all these methods are confined to a family of \emph{Gaussian probability paths}
\begin{equation}\label{e:P}
 \gP=\set{p_t \ \Bigg \vert \  p_t(x) = \int p_t(x|x_1)q(x_1)dx_1}   
\end{equation}
defined by the conditional Gaussian probabilities $p_t(x|x_1) = \gN(x|a_t x_1, m_t^2 I)$, on specific training examples $x_1\sim q$. The space $\gP$ allows scalable unbiased approximations of the loss in \eqref{e:loss} and therefore it is of particular interest. Within this space, diffusion models construct $p_t$ as a diffusion markov process and \cite{lipman2022flow,liu2022flow} draw inspiration from optimal transport to define the conditional probabilities. The different choices of previously explored paths are based on known processes, empirical heuristics \cite{karras2022elucidating} or learned paths \cite{nichol2021improved,kingma2021vdm}, but there is no theoretical result analyzing the \emph{optimality} of those paths. 

\pagebreak
The goal of this paper is to investigate the space of probability paths, $\gP$, and single out an \emph{optimal one} in some well defined useful sense. We consider the Kinetic Energy (KE) as the measure of path optimality. The Kinetic energy is directly tied to Optimal Transport solutions in the dynamic formulation \cite{benamou2000computational,villani2009optimal} where the optimal probability path $p_t$ minimizes the KE. Therefore, reducing the KE simplifies the trajectories of individual particles, allowing faster sampling and empirically improved performance. Our key observation is that the KE over $\gP$ takes a simplified form, where its dependence on the data distribution $q$ is summarized in a single, one dimensional, scalar function $\lambda:[0,\infty)\too[0,1]$. Intuitively, $\lambda$ measures the separation of the data samples at different scales, and therefore we call it the \emph{data separation function}. We then characterize the minimizers of KE over $\gP$. These observations allow us to approximate Kinetic Optimal (KO) probability paths for different datasets $q$. Note that $\gP$ is optimized over the Gaussian paths and therefore the KO path is usually not the Optimal Transport path. 

Surprisingly, for high dimensional data we are able to characterize Kinetic Optimal solutions. We prove that for arbitrary normalized datasets consisting of $n$ data points in $\Real^d$ the probability path defined by the Conditional Optimal-Transport (Cond-OT) path suggested by \cite{lipman2022flow} becomes kinetic optimal as 
\begin{equation}
    \frac{n}{\sqrt{d}} \too 0.
\end{equation}
Empirically, we show evidence that in practice Cond-OT is KO even for lower dimensions than predicted by the above asymptotics, and show that KO paths provides improvement in the KE of trained models over Cond-OT paths in low dimensions and that this improvement is diminished as dimension increases, as predicted by the theory. 

\section{Preliminaries}
We consider $\Real^d$ as our data domain with data points $x\in\Real^d$.  Probability densities over $\Real^d$ will be denoted by $p,q$. With a slight abuse of notations, but as is usually done, random variables will also be denoted as $x,x_0,x_1$, where a random variable $x$ distributed according to $p$ is denoted $x\sim p(x)$. A time-dependent vector field (VF) is a smooth function $u:[0,1]\times\Real^d\too\Real^d$. A VF defines a \emph{flow}, which is a diffeomorphism $\Phi:\Real^d\too\Real^d$, defined via a solution  $\phi_t(x)$ to the ordinary differential equation (ODE):
\begin{align}
    \frac{d}{dt}\phi_t(x) &= u_t(\phi_t(x))\\
    \phi_0(x) &= x
\end{align}
and setting $\Phi(x)=\phi_1(x)$. 
A probability density path is a time dependent probability density $p_t$, $t\in[0,1]$, smooth in $t$. Given a probability density path $p_t$, it is said to be \emph{generated} by $u_t$ from $p$ if
\begin{equation}\label{e:generating}
    p_t = (\phi_{t})_{\#} p
\end{equation}
where the $\#$ is the push-forward operation of probability densities. 
Continuous Normalizing Flows (CNFs) \cite{chen2018neural} is a general methodology for learning generative models $\Phi:\Real^d\too\Real^d$ by modeling $u_t$ as a neural network. All the generative models considered in this paper can be seen as instances of CNFs, trained with different losses and probability path supervision. 

\section{Optimal probability paths} 
Recent generative models learn continuous normalizing flows by regressing a vector field defining some fixed probability density path $p_t$ that is known to take a simple prior normal distribution (noise) $p=\gN(0,I)$ to a more complex distribution $q$ (data); \eqref{e:loss} describes a general loss objective for these models. In practice, the distribution $q$ is constructed (\eg, as a sum of delta distributions or a Gaussian mixture model) from a training set, which is sampled from some unknown distribution. Without limiting the following discussion, we will assume the target distribution $q$ is normalized, \ie, 
\begin{align} \label{e:q_mean}
    \E_{q(x_1)} x_1 &=0 \\ \label{e:q_var}
    \frac{1}{d}\E_{q(x_1)}\norm{x_1}^2 &= 1
\end{align}
The second condition can be understood as setting the average variance (across coordinates of data samples $x_1\in \Real^d$) to be $1$. 

\paragraph{Probability paths with affine conditionals.}
The probability density path $p_t$ is defined by marginalizing conditional probabilities $p_t(x|x_1)$, 
\begin{equation}\label{e:p_t}
    p_t(x) = \int p_t(x|x_1)q(x_1)dx_1.
\end{equation}
Our goal in this paper is to analyze a popular class of such probability paths (or paths, in short), referred here as \emph{Gaussian paths}, defined by the \emph{affine conditional probabilities} 
\begin{equation}\label{eq:p_cond}
    p_t(x|x_1) = \gN(x|a_t x_1, m_t^2 I),
\end{equation}
where $(a_t,m_t)\in \gA$, and
\begin{equation}\label{e:A}
    \gA = \set{(a_t,m_t) \ \Bigg \vert \substack{a_t,m_t:[0,1]\too[0,\infty] \\ a_0=0=m_1 \\ a_1=1=m_0}}, 
\end{equation}
where we assume all functions in $\gA$ are continuous in $[0,1]$ and smooth in $(0,1)$.

Note that any choice of $(a_t,m_t)\in \gA$ will make $p_0=p$ and $p_1=q$ (in the limit $t\too 1$)\footnote{Note that we use a forward time convention, where $t=0$ corresponds to noise and $t=1$ to data, differently from the standard convention in Diffusion models that use reverse timing. }. The reason we call such conditionals affine is that they can be seen as affine maps of a standard Gaussian: if $x_0\sim p=\gN(0,I)$ then $x = a_t x_1 + m_t x_0 \sim p_t(x|x_1)$.
Affine conditionals have been used in most previous methods, as described next. 

\emph{Diffusion models} use affine conditionals of the form \citep{ho2020denoising}:
\begin{equation}
    m_t =\sqrt{1-\xi_{1-t}^2} , \quad a_t= \xi_{1-t},
\end{equation}
where $\xi_{t}= e^{-\frac{1}{2}\int_0^t \beta(s)ds}$, and $\beta$ is the noise scale function.
\emph{Stochastic Interpolants} \citep{albergo2022si} are suggesting conditionals that are parameterized with trigonometric coefficients, similar also to the so-called cosine scheduling used in Diffusion \citep{salimans2022progressive}
\begin{equation}\label{e:SI}
    m_t = \cos{\frac{\pi}{2}t}, \quad a_t= \sin{\frac{\pi}{2}t}. 
\end{equation}

\emph{Flow Matching} \cite{lipman2022flow,liu2022flow} suggests linear coefficients (in $t$) called Conditional Optimal Transport (Cond-OT), 
\begin{equation}\label{eq:ot_cond}
    m_t = 1-t, \quad a_t = t.
\end{equation}

The different choices of affine conditionals in the probability path were justified mostly heuristically without any principled method for studying \emph{optimality} under some sensible criterion. In this work we will study optimality of affine conditionals from the point of view of \emph{Kinetic Energy}. To define the Kinetic Energy we must attach to the probability path a generating vector field (speed) in the sense of \eqref{e:generating}. 

\paragraph{Generating vector field of paths.} A natural choice for defining a generating vector field $u_t(x)$ for the path $p_t$ in the sense of \eqref{e:generating} works as follows (see \citet{lipman2022flow} for proofs of the following facts): first, the (conditional) vector field $u_t(x|x_1)$ that generates the affine conditional $p_t(x|x_1)$ from $p(x)$ is 
\begin{align}\label{e:u_t_cond}
    u_t(x|x_1) &= \alpha_t x + \beta_t x_1
\end{align}
where
\begin{equation}\label{e:def_beta_alpha}
    \alpha_t = \frac{\dot{m}_t}{m_t}, \quad \beta_t = \dot{a}_t-a_t \frac{\dot{m}_t}{m_t}
\end{equation}
where $\dot{a}_t=\frac{d}{dt}a_t$ denotes the time derivative. Further note that $u_t(x|x_1)$ is an affine function in variable $x$, and therefore a gradient in $x$ of some potential making this choice of conditional vector field $u_t(x|x_1)$ unique \cite{neklyudov2022action}. Secondly, the \emph{marginal} vector field $u_t(x)$ that generates $p_t(x)$ from $p(x)$ is provided via the following formula that aggregates these conditional vector fields
\begin{equation}\label{e:u_t}
    u_t(x) = \int u_t(x|x_1) \frac{p_t(x|x_1) q(x_1)}{p_t(x)}dx_1.
\end{equation}
In the next section we define the Kinetic Energy that uses both the marginal probability path $p_t(x)$ and its generating VF $u_t(x)$.

\subsection{Kinetic energy of affine paths}
Our goal is to study optimal choice of affine conditionals that lead to probability paths with minimal \emph{Kinetic Energy}. 
The Kinetic Energy (KE) is defined for a pair $(p_t,u_t)$ of a probability path and its generating VF (in the sense of \eqref{e:generating}):
\begin{equation}\label{e:ke}
    \gE(a_t,m_t) = \frac{1}{d}\E_{t,p_t(x)} \norm{u_t(x)}^2,
\end{equation}
where $p_t,u_t$ are defined in equations \ref{e:p_t} and \ref{e:u_t}, respectively. The Kinetic Energy measures the \emph{action} of the path, by aggregating the Kinetic Energy of individual particles' paths. The main motivation for studying the Kinetic Energy is that reducing the Kinetic Energy simplifies the overall particles' path. Namely, favors paths that are straight lines with constant speed parameterization. Such paths are in fact globally minimizing the Kinetic Energy and in turn lead to Optimal Transport (OT) interpolants, where the minimal KE value realizes the Wasserstein distance of the source and target probabilities, $\gT_2(p,q)$  \cite{mccann1997convexity,villani2021topics}. 
In the context of generative models, simplifying the paths improves sampling efficiency of the model (ideally, OT can be sampled with a single function evaluation), and is empirically shown to improve performance and training speed \cite{lipman2022flow,albergo2022si,liu2022flow}. Note however, that OT paths are generally \emph{outside} the class of probability paths with affine conditionals. Nevertheless, since affine conditionals are of particular interest due to the fact that they are amenable to large scale training, we investigate the kinetic optimal path within this class of paths.

A proxy energy that will be used to investigate the Kinetic Energy is the \emph{Conditional Kinetic Energy} (CKE), that is simply the aggregate kinetic energies of the conditional vector fields,
\begin{equation}\label{e:ke_cond}
    \gE_c(a_t,m_t) = \frac{1}{d} \E_{t,q(x_1),p_t(x|x_1)}\norm{u_t(x|x_1)}^2
\end{equation}
The CKE, for affine conditionals can be expressed as 
\begin{equation}\label{e:ke_cond_closed_form}
    \gE_c(a_t,m_t) = \int_0^1 \parr{\dot{m}_t^2 + \dot{a}_t^2 }dt.
\end{equation}
Next, the KE takes the form 
\begin{equation}\label{e:ke_explicit}
    \hspace{-2pt}\gE(a_t,m_t) = \gE_c(a_t,m_t) - \int_0^1  {\beta_t^2 }\hspace{-2pt} \parr{\hspace{-2pt}1\hspace{-2pt}-\hspace{-2pt}\lambda\hspace{-2pt}\parr{\frac{a_t}{m_t}}\hspace{-2pt}} dt,
\end{equation}
where $\lambda=\lambda_q:[0,\infty]\too [0,1]$ is a univariate scalar function defined solely in terms of the target distribution $q$, and bounded by $1$, \ie, $\lambda(s)\leq 1$. Computations supporting these derivations can be found in \Cref{aa:ke_comp}. The $\lambda$ function measures the separation in the data distribution $q$ and therefore we call it the \emph{data separation function}. Before providing an explicit expression for the data separation function, we note that having access to this function, one can optimize \eqref{e:ke_explicit} to find optimal conditional marginals $(a_t,m_t)\in \gA$. In the next section we discuss the data separation function $\lambda=\lambda_q$, followed by characterizing optimal solutions of $\gE_c$. Lastly, note that $\lambda(s)\leq 1$ implies that $\gE_c\geq \gE$.

\paragraph{The data separation function.}
The data separation function $\lambda=\lambda_q$ summarizes the contribution of the data $q$ to the Kinetic Energy (\eqref{e:ke_explicit}) in a single, univariate, scalar $[0,\infty]\too[0,1]$ function. It therefore provides a significant reduction in complexity that is due to the affine conditional probabilities. 

To write $\lambda=\lambda_q$ explicitly, we first denote by 
\begin{equation}
    \rho_s(x) = \int \rho_s(x|x_1)q(x_1)dx_1
\end{equation}
the result of adding Gaussian noise, $\rho_s(x|\mu) = \gN(x|\mu, s^{-2}I)$, of scale $s^{-1}$, to the data $q$. Now with Bayes' rule we compute the probability of data point $x_1$ given a noisy data point $x$,
$$\rho_s(x_1|x)=\frac{\rho_s(x|x_1)q(x_1)}{\rho_s(x)}.$$
Then the data separation function is
\begin{equation}\label{eq:lambda_s}
    \lambda(s) = \frac{1}{d}\E_{\rho_s(x)}\norm{\E_{\rho_s(x_1|x)}x_1}^2.
\end{equation}
See \Cref{aa:ke_comp} for detailed derivation of the $\lambda$ function. 

To understand the data separation function intuitively, consider discrete data $q$ which consists of data points $x_i$, $i\in [n]$: if this data is well separated at scale $s^{-1}$, then a noisy sample $x=x_i + \eps \sim \rho_s(x)$ will be closer to the data point $x_i$ than any other data point $x_j$. Hence, the softmax term $\rho_s(x_1|x)\approx \delta_{x_i}(x)$, where $\delta_{x_i}(x)$ is the delta distribution centered at $x_i$. 
Therefore in this case
$$\lambda(s) \approx \frac{1}{d} \E_{\rho_s(x|x_1),q(x_1)}\norm{x_1}^2 = \frac{1}{d} \E_{q(x_1)}\norm{x_1}^2 = 1,$$
where in the last equality we used our data assumption in \eqref{e:q_var}. So, the closer to $1$ the function value $\lambda(s)$ is, $s\in[0,\infty]$, the more separated $q$ at scale $s^{-1}$. Note however that for data with average variance of $1$ (see \eqref{e:q_var} again), $\lambda$ is globally bounded by $1$. Indeed using Jensen's inequality we have
\begin{equation}\label{e:lambda_bound}
    \lambda(s) \leq \frac{1}{d}\E_{\substack{\rho_s(x)\\\rho_s(x_1|x)}}\norm{x_1}^2 = \frac{1}{d}\E_{q(x_1)}\norm{x_1}^2 = 1.
\end{equation}

\subsection{Kinetic optimal solutions}
In this section we characterize kinetic optimal paths within the space of Gaussian paths. These solution will be depending on the data separation function $\lambda$.

To facilitate the variational analysis of the KE functional in \eqref{e:ke_explicit} we perform a change of variables and represent the conditional probability parameters $(a_t,m_t)\in \gA$ via 
\begin{equation}\label{e:polar_change}
    \begin{bmatrix}
    a_t \\ m_t
    \end{bmatrix} = 
    r_t \begin{bmatrix}
    \sin(\theta_t) \\ \cos(\theta_t)
    \end{bmatrix},
\end{equation}
where now our optimization space of affine conditionals becomes 
\begin{equation}\label{e:B}
    \gB = \set{(r_t,\theta_t) \ \Bigg \vert \substack{r_t:[0,1]\too[0,\infty)\\ \theta_t:[0,1]\too [0,\frac{\pi}{2}] \\ r_0=1=r_1 \\ \theta_0=0, \theta_1=\frac{\pi}{2}}}. 
\end{equation}

Plugging these coordinates in \eqref{e:ke_explicit} leads to the following form of the KE:
\begin{equation}\label{eq:ke_polar_form}
    \gE(r_t,\theta_t) = \int_0^1 \dot{r}_t^2 + r_t^2 \dot{\theta}_t^2 \overset{\gamma(\theta_t)}{\overbrace{ \brac{1-\frac{1-\lambda\parr{\tan(\theta_t)}}{\cos^2(\theta_t)}} }}  dt.
\end{equation}
The stationary solutions of \eqref{eq:ke_polar_form} obey the Euler-Lagrange equations, which is a set of two ordinary differential equations (ODEs) with two unknown functions $r_t,\theta_t$. Surprisingly, it turns out that under the mild extra condition, \ie, $\gamma(\theta_t)>0$, this system of equations is separable and reduces to two rather simple ODEs, with $r_t$ solvable analytically up to a single unknown parameter, and $\theta_t$ is characterized with a first order ODE: 

\begin{theorem}\label{thm:main_optimal_path}
The minimizer of the Kinetic Energy (\eqref{eq:ke_polar_form}) over all conditional probabilities $(r_t,\theta_t)\in \gB$ (\eqref{e:B}) such that $\gamma(\theta_t)>0$ for $t>0$ satisfy 
\begin{align}
    &r_t = \sqrt{1 -bt  + b t^2},\label{eq:op_r}\\
    &\dot{\theta}_t = \frac{1}{1 -bt + bt^2} \sqrt{\frac{b-\frac{1}{4}b^2}{\gamma(\theta_t)}}\label{eq:op_theta}
\end{align}
where $b\in [0,4]$ is determined by the boundary condition on $\theta_t$.
\end{theorem}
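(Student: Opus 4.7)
The plan is to treat this as a calculus of variations problem for the Lagrangian $L(r,\theta,\dot r,\dot\theta)=\dot r^2+r^2\gamma(\theta)\dot\theta^2$ and exploit two first integrals. Since $L$ is autonomous, the Beltrami identity (equivalently, Noether's theorem applied to time translation) yields that $L$ itself is conserved along extremals: $\dot r_t^2+r_t^2\gamma(\theta_t)\dot\theta_t^2=E$ for some constant $E\ge 0$.

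The second, less obvious conservation law comes from the $\theta$ Euler--Lagrange equation $\frac{d}{dt}\bigl(2r^2\dot\theta\,\gamma(\theta)\bigr)=r^2\dot\theta^2\gamma'(\theta)$. The plan is to verify directly, by expanding the time derivative of $r_t^2\sqrt{\gamma(\theta_t)}\dot\theta_t$ and substituting the EL identity, that the $\gamma'(\theta)$ terms cancel exactly, yielding $\frac{d}{dt}\bigl(r_t^2\sqrt{\gamma(\theta_t)}\dot\theta_t\bigr)=0$. Hence $c:=r_t^2\sqrt{\gamma(\theta_t)}\dot\theta_t$ is a constant of motion; this is precisely the place that requires $\gamma(\theta_t)>0$, so that the square root is well defined. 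I expect this to be the main obstacle, since the usual angular-momentum argument does not apply (the metric is not rotationally invariant because $\gamma$ depends on $\theta$), and the invariant has to be discovered by algebraic cancellation rather than by symmetry.

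With both conservation laws in hand, the system decouples. Substituting $r_t^2\gamma(\theta_t)\dot\theta_t^2=c^2/r_t^2$ into the energy relation gives the autonomous scalar ODE $\dot r_t^2=E-c^2/r_t^2$. Changing variable to $u_t=r_t^2$ turns this into $\dot u_t^2=4Eu_t-4c^2$, which on differentiation collapses to the linear equation $\ddot u_t=2E$. The unique solution with $u_0=u_1=1$ is $u_t=1-Et+Et^2$, so, setting $b:=E$, one obtains $r_t=\sqrt{1-bt+bt^2}$, matching \eqref{eq:op_r}. Plugging this back into $\dot u_t^2=4Eu_t-4c^2$ forces the compatibility $c^2=b-b^2/4$, and the constraint $c^2\ge 0$ restricts the parameter to $b\in[0,4]$.

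Finally, the $\theta$-equation follows from rearranging the conservation law: $\dot\theta_t=\frac{c}{r_t^2\sqrt{\gamma(\theta_t)}}=\frac{1}{1-bt+bt^2}\sqrt{\frac{b-b^2/4}{\gamma(\theta_t)}}$, which is \eqref{eq:op_theta}. The value of $b$ is then implicitly fixed by the boundary condition $\theta_1=\pi/2$, i.e., $\int_0^1\dot\theta_t\,dt=\pi/2$, which gives an equation for $b$ in terms of the data separation function $\lambda$. The remaining verification that the resulting $(r_t,\theta_t)$ indeed realizes a minimum (and not merely a stationary point) follows from convexity of $\dot r^2$ in $\dot r$ and of $r^2\gamma(\theta)\dot\theta^2$ in $\dot\theta$, together with the coercive behavior of $L$.
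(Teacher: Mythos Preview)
Your argument is correct. Both you and the paper work through the Euler--Lagrange equations and identify the same first integral from the $\theta$-equation (your $c=r_t^2\sqrt{\gamma(\theta_t)}\dot\theta_t$ is the square root of the paper's $C=r_t^4\dot\theta_t^2\gamma(\theta_t)$; the paper obtains it by dividing the $\theta$ EL equation through by $r_t^2\dot\theta_t\gamma(\theta_t)$ and recognizing each term as a logarithmic derivative). The genuine difference is in how the radial equation is handled: the paper substitutes $C$ into the $r$ Euler--Lagrange equation to get the second-order ODE $\ddot r_t=C/r_t^3$ and writes down the general solution directly, whereas you replace the $r$ EL equation by the Beltrami energy integral $\dot r_t^2+r_t^2\gamma(\theta_t)\dot\theta_t^2=E$, reduce to the first-order relation $\dot r_t^2=E-c^2/r_t^2$, and then linearize via $u_t=r_t^2$ to $\ddot u_t=2E$. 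Your route is slightly slicker in that it never needs to guess the form of the solution to $\ddot r=C/r^3$ and produces the compatibility $c^2=b-b^2/4$ as an algebraic byproduct of matching $\dot u_0^2$ to $4Eu_0-4c^2$; the paper's route is more direct once one knows the closed form. One caution: your final sentence about convexity establishing that the stationary point is a minimum is a bit optimistic, since $L$ is not jointly convex in $(r,\theta,\dot r,\dot\theta)$; however the theorem as stated only asserts a necessary condition on the minimizer, so this extra remark is not actually needed.
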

The proof of this theorem is provided in \Cref{a:optimal_solutions}. To better grasp the meaning of the condition $\gamma(\theta_t)>0$ we show in \Cref{aa:gamma_cond} that it is equivalent to 
\begin{equation}\label{e:gamma_cond}
    \lambda(s) \geq \frac{s^2}{1+s^2}, \quad s\in [0,\infty].
\end{equation}
This implies a condition on the target data distribution $q$ for which our optimality result holds. To get a better sense of this condition in \Cref{aa:gamma_cond} we also prove that if $q(x)=\gN(0,I)$, namely a standard Gaussian, then $\lambda(s)=\frac{s^2}{1+s^2}$. Hence, the condition in \eqref{e:gamma_cond} requires the separation of $q$ to be at-least that of a standard Gaussian. All the empirical data $q$ we tested in this paper has at-least the separation of a standard Gaussian, namely satisfy \eqref{e:gamma_cond}.

\paragraph{Conditional kinetic optimal solutions.}
One case where the optimal solution is known analytically is when $\gamma(\theta_t)\equiv 1$, which happens when $\lambda(s)\equiv 1$. In this case, \eqref{e:ke_explicit} shows that the Kinetic Energy and the Conditional Kinetic Energy are equal, that is $\gE=\gE_c$. Consequently solving the Euler-Lagrange equations can be done directly for \eqref{e:ke_cond_closed_form} and the solutions are the Cond-OT conditionals in \eqref{eq:ot_cond}. Alternatively, Theorem \ref{thm:main_optimal_path} can be used and \eqref{eq:op_theta} can be solved to get $\theta_t$ as well as the parameter $b$ leading to the solution
\begin{equation*}
    r_t = \sqrt{(1-t)^2 +t^2} \quad \theta_t=\arctan\parr{\frac{t}{1-t}}
\end{equation*}
and using \eqref{e:polar_change} to solve for $a_t,m_t$ we get again the Cond-OT conditionals in \eqref{eq:ot_cond}. 

\paragraph{Non-analytic solutions.}
When an analytic solution of \eqref{eq:op_theta} cannot be achieved one can approximate the data separation function $\lambda$ numerically, and then optimize the Kinetic Energy $\gE$ directly for $r_t,\theta_t$, where for $r_t$ it is convenient to use the one parameter solution family in \eqref{eq:op_r}. We come back to this procedure in the experiments section.

\begin{figure*}[h!]
    \centering
    \begin{tabular}{@{\hspace{0pt}}c@{\hspace{0.03\textwidth}}c@{\hspace{0.03\textwidth}}c@{\hspace{0pt}}}
         \includegraphics[width=0.30\textwidth]{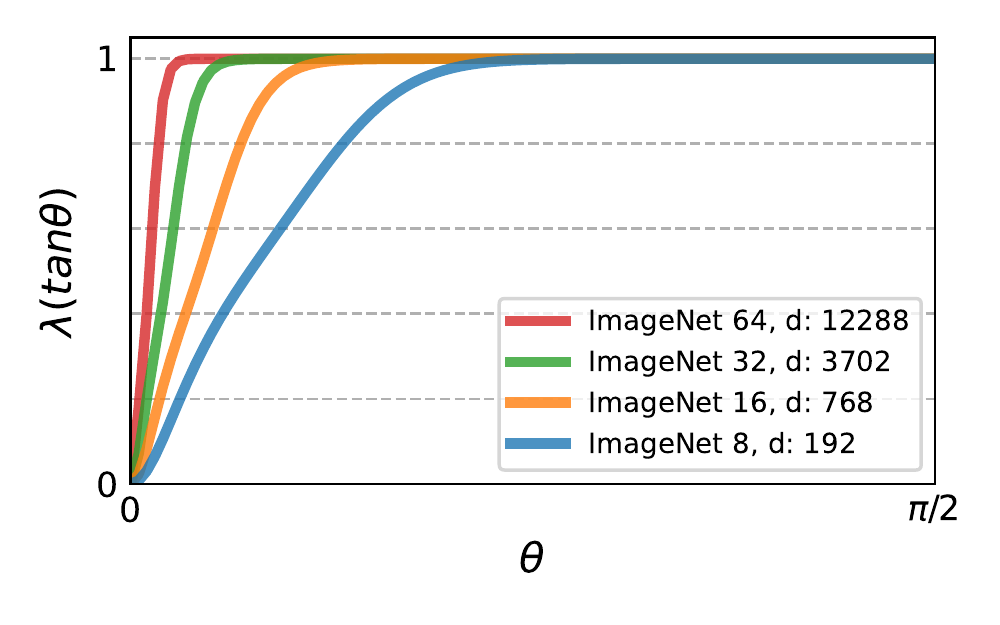} & \includegraphics[width=0.30\textwidth]{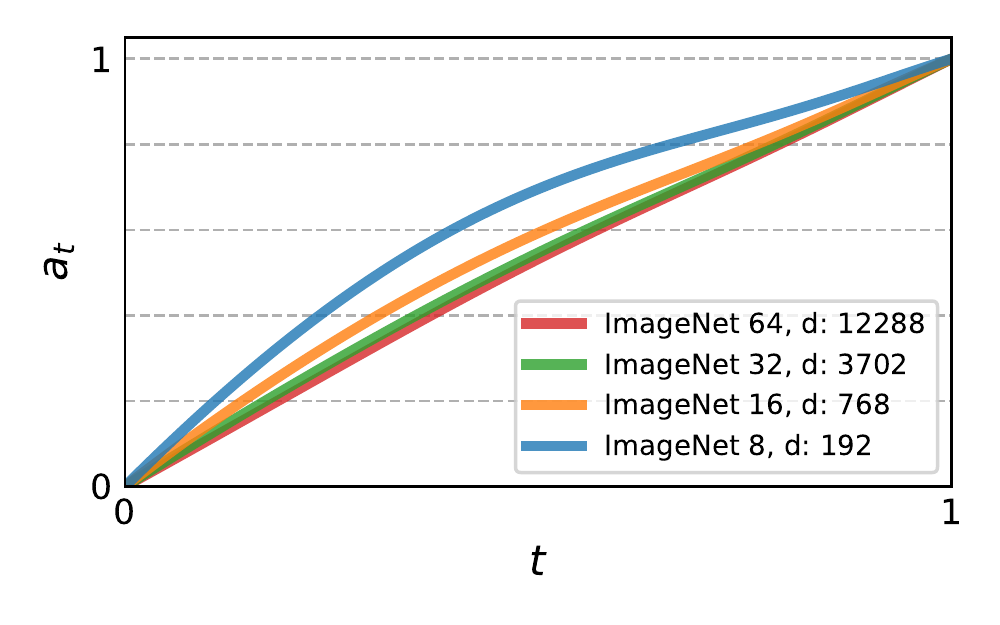} &
         \includegraphics[width=0.30\textwidth]{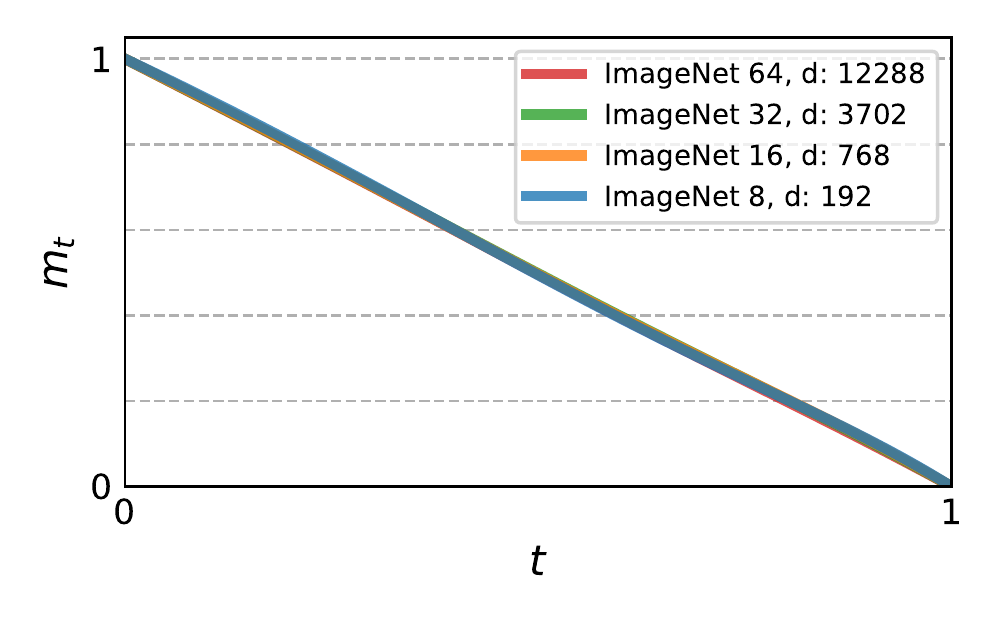} 
    \end{tabular}
    \caption{(Left) Estimation of the data separation function $\hat{\lambda}$ for ImageNet-8/16/32/64; note the convergence to $1$ as data dimension increases, as anticipated by Theorem \ref{thm:lambda_bound}. (Middle) depicts the optimal $a_t$; and (right) the optimal $m_t$ for ImageNet-8/16/32/64; note convergence to the linear solution (Cond-OT). }
    \label{fig:lambdad}
\end{figure*}

\section{Kinetic optimal paths in high dimensions} 
\label{s:ko_paths_in_high_dim}

Finding kinetic optimal conditional paths $(a_t,m_t)$ in closed form for arbitrary data $q$ is usually hard. However, interestingly, as the dimension of the data increases compared to the number of data samples, a kinetic optimal path can be characterized, at least asymptotically, to be paths defined with the Cond-OT conditional (\eqref{eq:ot_cond}). To facilitate this discussion consider again \eqref{e:ke_explicit} that provides 
\begin{align}\label{e:Ec_E_lambda}
    \hspace{-2pt}\gE_c(a_t,m_t) - \gE(a_t,m_t) &=  \int_0^1  {\beta_t^2 } \parr{\hspace{-2pt}1\hspace{-2pt}-\hspace{-2pt}\lambda\hspace{-2pt}\parr{\frac{a_t}{m_t}}\hspace{-2pt}} dt.
\end{align}
As discussed above and can be seen from this equation when the data separation function achieves its upper bound, $\lambda(s)\equiv 1$, the KE coincides with the CKE, and in this case the optimal path is defined by the Cond-OT conditional (\eqref{eq:ot_cond}). However, it is not a-priori clear if $\lambda$ actually approaches $1$ for any data $q$, and whether closeness of $\lambda$ to $1$ implies Cond-OT is asymptotically kinetic optimal. In this section we show that for \emph{arbitrary} finite normalized dataset consisting of $n$ samples in $\Real^d$, indeed $\lambda\too 1$ (in $L^1$ sense) as $n/\sqrt{d}\too 0$, that can be used to show that the KE of the path defined by the Cond-OT conditional is asymptotically optimal.  In the experiments section we also verify this property empirically for real-world dataset in high dimensions (\ie, ImageNet).

\paragraph{Normalized finite data and main results.} We consider an arbitrary \emph{normalized finite data distribution} $q$. That is, $q$ is distributing equal mass (\ie, $n^{-1}$) among $n$ points $x_{i}\in\Real^d$, $i\in [n]$, and is satisfying equations \ref{e:q_mean} and \ref{e:q_var}, which in the discrete case means
\begin{align}
        & \frac{1}{n}\sum_{i=1}^n x_i = 0,\label{e:q_disc_mean}\\ \label{e:normalization_discrete}
        & \frac{1}{dn}\sum_{i=1}^n\norm{x_i}^2 = 1.
\end{align}
Our first result considers such data distribution $q$ and shows convergence of $\lambda$ to the constant function $1$ as $n/\sqrt{d}\too 0$: 
\begin{theorem}\label{thm:lambda_bound}
Let $q$ be an arbitrary normalized finite data distribution.
Then,
\begin{equation}\label{e:int_abs_1_minus_lambda_bound}
    \int_{0}^{\infty}\abs{1-\lambda(s)}ds \le \frac{3n}{\sqrt{d}}.
\end{equation}
\end{theorem}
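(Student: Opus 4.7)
The plan is to recast $1-\lambda(s)$ as a denoising MMSE, bound it pointwise using the softmax posterior, and then turn that bound into an exponential in pairwise distance via a Bhattacharyya-type Gaussian integration. After integrating in $s$, a Cauchy--Schwarz step against the elementary identity $\sum_{i,j}\norm{x_i-x_j}^2 = 2n^2 d$ will produce the $n/\sqrt d$ rate.

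First I would establish the identity
\[1-\lambda(s) \;=\; \frac{1}{d}\,\E\,\norm{X_1-\hat{X}_1(X)}^2,\]
where $X=X_1+Z/s$, $Z\sim\gN(0,I)$, and $\hat{X}_1(x):=\E_{\rho_s(x_1|x)}[x_1]=\sum_j w_j(x,s)\,x_j$ is the posterior mean with softmax weights $w_j(x,s):=\rho_s(x|x_j)/\sum_k\rho_s(x|x_k)$. This follows from the tower identity $\E\langle X_1,\hat{X}_1(X)\rangle=\E\norm{\hat{X}_1(X)}^2$ together with $\E_q\norm{x_1}^2=d$; since $\lambda\le 1$ by \eqref{e:lambda_bound}, $|1-\lambda(s)|=1-\lambda(s)\ge 0$. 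Conditioning on $X_1=x_i$ and using $\sum_j w_j=1$, Cauchy--Schwarz applied to the convex combination $\hat{X}_1(X)-x_i=\sum_{j\neq i}w_j(X,s)(x_j-x_i)$ gives the pointwise bound
\[\norm{\hat{X}_1(X)-x_i}^2 \;\le\; \sum_{j\neq i} w_j(X,s)\,\norm{x_j-x_i}^2.\]

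The crux is then to bound $\E_Z[w_j(x_i+Z/s,s)]$. I would keep only the $i$ and $j$ terms in the softmax denominator, yielding $w_j(x,s)\le \rho_s(x|x_j)/(\rho_s(x|x_i)+\rho_s(x|x_j))$, and apply AM--GM on the denominator to obtain $\rho_s(x|x_i)\,w_j(x,s)\le \tfrac12\sqrt{\rho_s(x|x_i)\rho_s(x|x_j)}$. Completing the square in the exponent shows the geometric mean is a Gaussian density centered at $(x_i+x_j)/2$ of scale $s^{-2}I$, times the Bhattacharyya factor $\exp(-s^2\norm{x_i-x_j}^2/8)$, so integrating in $x$ gives
\[\E_Z\,w_j(x_i+Z/s,\,s) \;\le\; \tfrac12\exp\!\bigl(-s^2\norm{x_i-x_j}^2/8\bigr).\]

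Combining the pointwise estimate with this weight bound and averaging uniformly over $i$ yields $d(1-\lambda(s))\le \tfrac{1}{2n}\sum_{i\neq j}\norm{x_i-x_j}^2\exp(-s^2\norm{x_i-x_j}^2/8)$. Integrating over $s\in[0,\infty)$ using $\int_0^\infty e^{-s^2 a/8}\,ds=\sqrt{2\pi/a}$ with $a=\norm{x_i-x_j}^2$ turns each $\norm{x_i-x_j}^2$ into a single $\norm{x_i-x_j}$, producing $d\int_0^\infty|1-\lambda(s)|\,ds\le \tfrac{\sqrt{2\pi}}{2n}\sum_{i\neq j}\norm{x_i-x_j}$. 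The normalization \eqref{e:q_disc_mean}--\eqref{e:normalization_discrete} supplies the identity $\sum_{i,j}\norm{x_i-x_j}^2=2n\sum_i\norm{x_i}^2-2\norm{\sum_i x_i}^2=2n^2 d$, so Cauchy--Schwarz across index pairs gives $\sum_{i\neq j}\norm{x_i-x_j}\le \sqrt{n(n-1)\cdot 2n^2 d}\le n^2\sqrt{2d}$, and plugging in yields $\int_0^\infty|1-\lambda(s)|\,ds\le n\sqrt{\pi}/\sqrt d\le 3n/\sqrt d$. The main obstacle is the weight bound: the naive estimates $w_j\le 1$ or $w_j\le\rho_s(x|x_j)/\rho_s(x|x_i)$ both only integrate to $\E_Z w_j\le 1$ with no decay in pairwise distance, and hence give nothing useful after integration in $s$; the geometric-mean (Bhattacharyya) step is what extracts the exponential decay $\exp(-s^2\norm{x_i-x_j}^2/8)$ on which the rest of the argument hinges.
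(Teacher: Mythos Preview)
Your argument is correct and in fact yields the sharper constant $\sqrt{\pi}\approx 1.77$ in place of $3$. It is, however, a genuinely different route from the paper's. The paper does not use the MMSE identity; instead it expands $\lambda(s)$ directly to obtain
\[
1-\lambda(s)\le \frac{1}{dn}\sum_{i}\sum_{l\ne i}\norm{x_i}\,\norm{x_l-x_i}\,\E_{\rho_s(x|x_i)}\rho_s(x_l|x),
\]
and then proves a dedicated lemma bounding $\E_{\rho_s(x|x_j)}\rho_s(x_i|x)\le \eta(s\norm{x_i-x_j})$ with $\eta(t)=\E_{\gN(z|0,1)}\bigl[1+e^{t^2/2-tz}\bigr]^{-1}$, by rotational invariance reducing to a one-dimensional Gaussian tail estimate and showing $\int_0^\infty\eta\le 3$; the proof closes with $\sum_i\norm{x_i}\le n\sqrt d$ via Jensen on the variance condition alone. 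Your Bhattacharyya/AM--GM step replaces the paper's $\eta$-lemma with a clean closed-form exponential $\tfrac12 e^{-s^2\norm{x_i-x_j}^2/8}$, and your Jensen bound on $\norm{\hat X_1-x_i}^2$ produces $\norm{x_i-x_j}^2$ factors rather than $\norm{x_i}\norm{x_i-x_j}$; this lets you finish with the pairwise identity $\sum_{i,j}\norm{x_i-x_j}^2=2n^2 d$ and Cauchy--Schwarz. The trade-off: your path is shorter and gives a better constant, while the paper's approach isolates the posterior-weight estimate as a reusable lemma and needs only the variance normalization (the zero-mean assumption is not actually used there; note that your identity step also survives without it since $\sum_{i,j}\norm{x_i-x_j}^2\le 2n^2 d$ in general). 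One minor wording point: the bound $\norm{\sum_j w_j(x_j-x_i)}^2\le\sum_j w_j\norm{x_j-x_i}^2$ is Jensen for the convex function $\norm{\cdot}^2$, not Cauchy--Schwarz.
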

This theorem shows that essentially data separation is inevitable in high dimensions and that we can expect $\lambda$ to approach $1$ for finite data in sufficiently high dimension. Using Theorem \ref{thm:lambda_bound} in \eqref{e:Ec_E_lambda} we prove the following theorem, under some mild extra assumption over the family $\gA$ of affine conditional probabilities (\eqref{e:A}): 
\begin{theorem}\label{thm:ke_squeeze}
Let $q$ be an arbitrary normalized finite data distribution. Assume all $(a_t,m_t)\in\gA$ in \eqref{e:A} satisfy: (i) $\frac{a_t}{m_t}$ is strictly monotonically increasing; and (ii) $a_t,m_t$ have uniform bounded Sobolev $W^{1,\infty}([0,1])$ norm\footnote{The Sobolev $W^{1,\infty}([0,1])$ norm is defined as $\norm{f}_{1,\infty} = \max\set{\max_{t\in[0,1]}|f(t)|,\max_{t\in[0,1]}|\dot{f}(t)|}$}, \ie,  $\norm{a_t}_{1,\infty},\norm{m_t}_{1,\infty}\leq M$. Then, for all $(a_t,m_t)\in \gA$ 
\begin{equation}
    \gE_c(a_t,m_t) - 6M^2\frac{n}{\sqrt{d}}\le \gE(a_t,m_t)  \le \gE_c(a_t,m_t).
\end{equation}
\end{theorem}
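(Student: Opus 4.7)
The upper bound is immediate: equation \ref{e:Ec_E_lambda} gives $\gE_c(a_t,m_t) - \gE(a_t,m_t) = \int_0^1 \beta_t^2 (1 - \lambda(a_t/m_t))\,dt$, and since $\lambda(s)\le 1$ for all $s$ (by \eqref{e:lambda_bound}), the integrand is nonnegative. So the real work is the lower bound, which amounts to showing
\[
\int_0^1 \beta_t^2 \parr{1 - \lambda\parr{\tfrac{a_t}{m_t}}} dt \;\le\; 6M^2 \frac{n}{\sqrt{d}}.
\]

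My plan is to reduce this one-dimensional integral in $t$ to the integral against $ds$ that appears in Theorem \ref{thm:lambda_bound}, via the change of variables $s = a_t/m_t$. Assumption (i) says $s(t)$ is strictly monotonically increasing from $s(0)=0$ to $s(1)=+\infty$, so the substitution is a bijection on $(0,1)\to(0,\infty)$. Differentiating, $\frac{ds}{dt} = \frac{\dot{a}_t m_t - a_t \dot{m}_t}{m_t^2} = \frac{\beta_t}{m_t}$, using the definition of $\beta_t$ in \eqref{e:def_beta_alpha}. Since $s$ is increasing and $m_t>0$ on $(0,1)$, we have $\beta_t>0$ there, so $dt = (m_t/\beta_t)\,ds$ and
\[
\int_0^1 \beta_t^2 \parr{1 - \lambda\parr{\tfrac{a_t}{m_t}}} dt \;=\; \int_0^\infty \beta_{t(s)}\, m_{t(s)} \parr{1 - \lambda(s)}\, ds.
\]

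Next I would bound the Jacobian weight $\beta_t m_t$ pointwise. Expanding,
\[
\beta_t m_t \;=\; \dot{a}_t m_t - a_t \dot{m}_t,
\]
so assumption (ii), $\norm{a_t}_{1,\infty},\norm{m_t}_{1,\infty}\le M$, yields $|\beta_t m_t|\le 2M^2$. Plugging this in together with $1-\lambda(s)\ge 0$ (hence $|1-\lambda(s)| = 1-\lambda(s)$), we get
\[
\gE_c(a_t,m_t)-\gE(a_t,m_t) \;\le\; 2M^2 \int_0^\infty \abs{1-\lambda(s)}\,ds \;\le\; 2M^2 \cdot \frac{3n}{\sqrt{d}} \;=\; 6M^2\,\frac{n}{\sqrt{d}},
\]
where the last step applies Theorem \ref{thm:lambda_bound}.

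The only subtlety — what I would flag as the main technical point rather than a deep obstacle — is justifying the change of variables at the boundary $t=1$, where $m_t\to 0$ and $s\to\infty$. Since $\beta_t^2(1-\lambda(a_t/m_t))\le \beta_t^2$ and $\beta_t$ is bounded on $[0,1]$ by $|\dot{a}_t|+|a_t|\,|\dot{m}_t|/m_t$, integrability near $t=1$ needs brief justification; the cleanest route is to carry out the computation on $[0,1-\epsilon]$, which gives $\int_0^{s(1-\epsilon)} \beta_t m_t(1-\lambda(s))\,ds$, apply the bound $|\beta_t m_t|\le 2M^2$ (which holds uniformly in $\epsilon$), and then let $\epsilon\to 0$ using monotone convergence on the right-hand side together with continuity of $\gE_c-\gE$ in the upper limit. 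Everything else is a routine assembly of the identities already derived in the paper.
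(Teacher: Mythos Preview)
Your proof is correct and follows essentially the same route as the paper: the upper bound from $\lambda\le 1$, and the lower bound by writing $\beta_t^2 = (m_t\beta_t)\cdot(\beta_t/m_t)$, bounding $|m_t\beta_t|=|\dot a_t m_t - a_t\dot m_t|\le 2M^2$, then using $\beta_t/m_t = \tfrac{d}{dt}(a_t/m_t)$ to change variables $s=a_t/m_t$ and invoke Theorem~\ref{thm:lambda_bound}. The paper does not discuss the boundary behavior at $t=1$ that you flag, but your $\epsilon$-truncation argument is a clean way to make that step rigorous.
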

This shows that the KE of any path defined by a conditional $(a_t,m_t)\in \gA$ will converge to the CKE as $n/\sqrt{d}\too 0$. Since a probability path defined by Cond-OT is minimizing the CKE, we see that Cond-OT is asymptotically kinetic optimal as $n/\sqrt{d}\too 0$. We summarize:
\definecolor{mygray}{gray}{0.95}
\begin{center}\vspace{-10pt}			
    \colorbox{mygray} {		
      \begin{minipage}{0.977\linewidth} 	
       \centering
        \begin{corollary}\label{cor:main}
        The Gaussian probability path defined by the conditional Cond-OT (\eqref{eq:ot_cond}) is asymptotically kinetic optimal as $n/\sqrt{d}\too 0$. 
\end{corollary}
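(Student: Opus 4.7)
\textbf{Proof plan for Corollary \ref{cor:main}.} The strategy is to combine the two-sided bound of Theorem \ref{thm:ke_squeeze} with the classical fact that the Cond-OT pair $(a_t^\star,m_t^\star)=(t,1-t)$ is the global minimizer of the conditional kinetic energy $\gE_c$ over the admissible class $\gA$. The squeeze bound says that $\gE$ and $\gE_c$ agree up to an error that vanishes as $n/\sqrt d\to 0$ uniformly over all sufficiently regular affine conditionals, so the minimizers of $\gE$ and $\gE_c$ must asymptotically coincide.

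First I would identify the minimizer of $\gE_c$ on $\gA$. By the closed form \eqref{e:ke_cond_closed_form}, $\gE_c(a_t,m_t)=\int_0^1(\dot a_t^2+\dot m_t^2)\,dt$, which decouples into two independent strictly convex variational problems with fixed endpoints $a_0=0,\,a_1=1$ and $m_0=1,\,m_1=0$. The Euler--Lagrange equations $\ddot a_t=0=\ddot m_t$ have the unique solutions $a_t^\star=t,\,m_t^\star=1-t$, giving $\gE_c(a_t^\star,m_t^\star)=2$. Next, I would check that Cond-OT lies in the admissible family of Theorem \ref{thm:ke_squeeze}: $a_t^\star/m_t^\star=t/(1-t)$ is strictly increasing on $[0,1)$, and $\norm{a_t^\star}_{1,\infty}=\norm{m_t^\star}_{1,\infty}=1$, so it satisfies (i) and (ii) with any constant $M\ge 1$.

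Now the chain of inequalities is immediate. Fix $M\ge 1$ and let $\gA_M\subset\gA$ be the subclass of affine conditionals satisfying hypotheses (i) and (ii) of Theorem \ref{thm:ke_squeeze} with constant $M$. For any $(a_t,m_t)\in\gA_M$, the lower half of Theorem \ref{thm:ke_squeeze} gives $\gE_c(a_t,m_t)\le\gE(a_t,m_t)+6M^2 n/\sqrt d$, while its upper half applied to Cond-OT yields $\gE(a_t^\star,m_t^\star)\le\gE_c(a_t^\star,m_t^\star)$. Combining these with the minimization property of $\gE_c$,
\begin{equation*}
    \gE(a_t^\star,m_t^\star)\;\le\;\gE_c(a_t^\star,m_t^\star)\;\le\;\gE_c(a_t,m_t)\;\le\;\gE(a_t,m_t)+\frac{6M^2 n}{\sqrt d}.
\end{equation*}
Taking the infimum over $(a_t,m_t)\in\gA_M$ and letting $n/\sqrt d\to 0$ gives $\gE(a_t^\star,m_t^\star)\le\inf_{\gA_M}\gE(a_t,m_t)+o(1)$, which is the asserted asymptotic kinetic optimality.

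The argument is essentially a direct squeeze, so there is no substantial technical obstacle beyond the elementary variational computation for $\gE_c$. The only subtlety worth flagging is interpretational: Theorem \ref{thm:ke_squeeze} requires a uniform Sobolev bound $M$ on the competitors, so the corollary should be understood as optimality within any fixed regularity class $\gA_M$ (which contains Cond-OT itself as well as all standard diffusion and stochastic-interpolant schedules). No single competitor can beat Cond-OT by more than $6M^2 n/\sqrt d$, hence the optimality in the limit.
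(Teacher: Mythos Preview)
Your proposal is correct and follows essentially the same route as the paper: the paper derives the corollary directly from Theorem \ref{thm:ke_squeeze} together with the fact that Cond-OT minimizes the conditional kinetic energy $\gE_c$, and your chain of inequalities spells out precisely that squeeze argument. Your additional remarks---the explicit Euler--Lagrange computation for the minimizer of $\gE_c$, the verification that Cond-OT satisfies hypotheses (i) and (ii), and the caveat that optimality is within a fixed regularity class $\gA_M$---are helpful clarifications but do not depart from the paper's approach.
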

      \end{minipage}}			
\end{center}

A comment regrading the first assumption on $\gA$: note that this assumption is equivalent to monotonicity of the signal-to-noise-ratio (SNR), $a_t^2/m_t^2$, which is a natural assumption on conditional paths \cite{kingma2021variational}.

\paragraph{Proof idea.}
The complete proofs for Theorems \ref{thm:lambda_bound} and  \ref{thm:ke_squeeze} are provided in \Cref{aa:data_separation}; here we provide the proof idea for Theorem \ref{thm:lambda_bound}. We first formulate $\lambda$ (\eqref{eq:lambda_s}) for the finite data case. That is 
\begin{equation}\label{eq:lambda_finite}    \lambda(s) = \frac{1}{dn}\sum_{j=1}^n\E_{\rho_s(x|x_j)}\norm{\sum_{i=1}^n x_i\rho_s(x_i|x)}^2,
\end{equation}
where using Bayes' rule we have
\begin{equation*}
    \rho_s(x_i|x) = \frac{\rho_s(x|x_i)q(x_i)}{\sum_{l=1}^nq(x_l)\rho_s(x|x_l)} = \frac{e^{-\frac{s^2}{2}\norm{x_i -x}^2}}{\sum_{l=1}^ne^{-\frac{s^2}{2}\norm{x_l -x}^2}}.
\end{equation*}

That is, $\rho_s(x_i|x)$ is the $i$-th entry of a weighted Softmax (with $s^2$ weight) of the squared Euclidean distances between a point $x$ and all data points $x_i$. The key observation is that for $x\sim\rho_s(x|x_j)$, a noisy sample of $x_j$, the probability $\rho_s(x_i|x)$ is bounded above in expectation by a function that depends only on the pairwise distances, \ie, $\norm{x_i-x_j}$. Furthermore, this function is decaying exponentially with these distances. That is, if the data is ``separated'' in the sense that $\norm{x_i-x_j}$ is large then the data separation function $\lambda$ will be closer to its maximal value of $1$. 
\begin{lemma}\label{lem:key_lem_new}
    Let $q$ be an arbitrary normalized finite data distribution. Then for every $s>0$,
    \begin{align}
           \E_{\rho_s(x|x_j)}\rho_s(x_i|x) \le \eta\parr{s\norm{x_i-x_j}},
    \end{align}
    where $\eta(t)$ is integrable in $[0,\infty)$ and  
    \begin{equation}
        \int_0^{\infty}\eta(t) \le 3.
    \end{equation}
\end{lemma}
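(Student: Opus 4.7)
I would start by parametrizing the noisy sample as $x=x_j+z/s$ with $z\sim\gN(0,I)$, and lower-bounding the softmax denominator by the single $l=j$ summand (all other summands are positive). After cancellation this yields the pointwise inequality
\begin{equation*}
    \rho_s(x_i|x) \;\le\; \frac{e^{-\frac{s^2}{2}\|x_i-x\|^2}}{e^{-\frac{s^2}{2}\|x_j-x\|^2}} \;=\; \exp\!\big(\tfrac{s^2}{2}\|x_j-x\|^2-\tfrac{s^2}{2}\|x_i-x\|^2\big).
\end{equation*}
Expanding $\|x_i-x\|^2=\|x_i-x_j-z/s\|^2$ and cancelling the $\|z\|^2$ terms reduces the right-hand side to
\begin{equation*}
    \rho_s(x_i|x) \;\le\; \exp\!\big(-\tfrac12 T^2 + T\,w\big),
\end{equation*}
where $T:=s\|x_i-x_j\|$, $u:=(x_i-x_j)/\|x_i-x_j\|$ (the case $i=j$ is trivial, so one can take any unit $u$ there), and $w:=\langle z,u\rangle$.

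\textbf{Reduction to a one-dimensional expectation.} Because $z\sim\gN(0,I)$ and $u$ is a unit vector, $w\sim\gN(0,1)$. Combining the above with the trivial bound $\rho_s(x_i|x)\le 1$ I obtain
\begin{equation*}
    \E_{\rho_s(x|x_j)}\rho_s(x_i|x) \;\le\; \E_{w\sim\gN(0,1)}\min\!\big(1,\,e^{-T^2/2+Tw}\big),
\end{equation*}
whose right-hand side depends only on $T=s\|x_i-x_j\|$. This already gives a univariate majorant of the form $\eta(s\|x_i-x_j\|)$.

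\textbf{Closed form for $\eta$ and the integral bound.} I would then evaluate the one-dimensional min-expectation in closed form. On the event $\{w\ge T/2\}$ one has $-T^2/2+Tw\ge 0$ so the min equals $1$, contributing $\Phi(-T/2)$. On the complementary event $\{w<T/2\}$, completing the square gives $e^{-T^2/2+Tw}\cdot\tfrac{1}{\sqrt{2\pi}}e^{-w^2/2}=\tfrac{1}{\sqrt{2\pi}}e^{-(w-T)^2/2}$, whose integral over $(-\infty,T/2)$ again equals $\Phi(T/2-T)=\Phi(-T/2)$. Hence set $\eta(T):=2\Phi(-T/2)$. Using the tail identity $\int_0^\infty\Phi(-u)\,du=\int_0^\infty u\,\tfrac{1}{\sqrt{2\pi}}e^{-u^2/2}\,du=1/\sqrt{2\pi}$ (Fubini), a linear substitution yields $\int_0^\infty \eta(T)\,dT = 4/\sqrt{2\pi} < 2 \le 3$, as required.

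\textbf{Main obstacle.} The only non-elementary point is realizing that the pointwise exponential bound cannot be integrated against $\gN(0,1)$ directly: the Gaussian MGF gives $\E\,e^{-T^2/2+Tw}=1$ for every $T$, which is useless. The decay in $T$ is only recovered after first intersecting with the trivial bound $\rho_s(x_i|x)\le 1$; once this is done, the resulting $\min$-expectation has a clean closed form $2\Phi(-T/2)$, and integrability is immediate.
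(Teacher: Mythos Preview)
Your proof is correct. Both your argument and the paper's share the same skeleton: write $x=x_j+z/s$ with $z\sim\gN(0,I)$, drop terms from the softmax denominator, and reduce to a one-dimensional expectation via rotational invariance. The difference is in which terms are kept. The paper retains \emph{two} summands ($l=i$ and $l=j$), obtaining the sigmoid-type bound
\[
\eta(T)=\E_{\gN(z|0,1)}\frac{1}{1+e^{T^2/2-Tz}},
\]
which is automatically $\le 1$; it then symmetrizes the integrand and uses the cruder estimate $\eta(T)\le\min\{1,\sqrt{2/\pi}\,T^{-1}e^{-T^2/8}\}$, integrating to exactly $3$. You instead keep only the single $l=j$ term, get the raw exponential $e^{-T^2/2+Tw}$, and then intersect with the trivial bound $\rho_s(x_i|x)\le 1$; this produces the closed form $\eta(T)=2\Phi(-T/2)$ and the sharper constant $4/\sqrt{2\pi}<2$. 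The paper's sigmoid is pointwise tighter than your $\min(1,e^{-T^2/2+Tw})$ (since $\tfrac{A}{1+A}\le\min(1,A)$), but its subsequent tail bounds give that advantage away. Your route is more elementary and yields a better numerical constant.
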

This lemma is proved in \Cref{aa:data_separation} as-well. Let us use it to prove our theorem. Using the Cauchy–Schwarz inequality in \eqref{eq:lambda_finite} we can show that 
\begin{align*}
   1-\lambda(s)\hspace{-2pt} \leq  \hspace{-2pt}\frac{1}{dn}\hspace{-2pt}\sum_{i=1}^n \norm{x_i} \sum_{\substack{j=1 \\ j\neq i}}^n \Biggl(\hspace{-2pt}  \norm{x_j-x_i}\E_{\rho_s(x|x_i)}\rho_s(x_j|x)\hspace{-2pt} \biggr)\hspace{-2pt}.
\end{align*}
Now, the integral w.r.t.~$s$ of the term in the parenthesis can be bounded after a change of coordinates $r=s\norm{x_i-x_j}$ using Lemma \ref{lem:key_lem_new} by $3$. Lastly, using Jensen and \eqref{e:normalization_discrete} we have that  $\sum_{i}\norm{x_i}\leq  n\sqrt{d}$, which provides \eqref{e:int_abs_1_minus_lambda_bound}.

\begin{figure*}
\centering
\begin{tabular}{@{}c@{\hspace{4pt}}|@{\hspace{4pt}}c@{\hspace{2pt}}c@{\hspace{2pt}}c@{\hspace{2pt}}c@{\hspace{2pt}}c@{}}
    \includegraphics[width=0.68\textwidth]{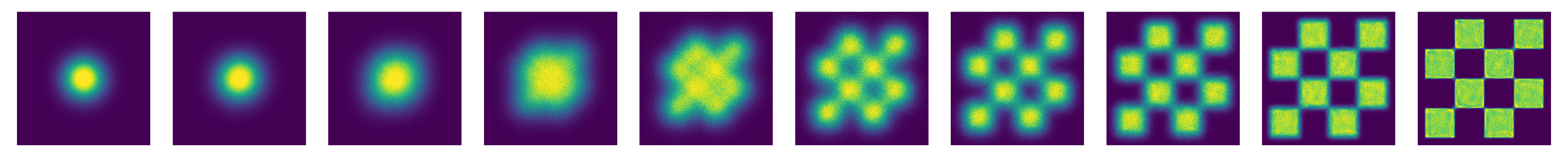} & \rot{\quad \tiny{SI}} &
      \includegraphics[width=0.0632\textwidth]{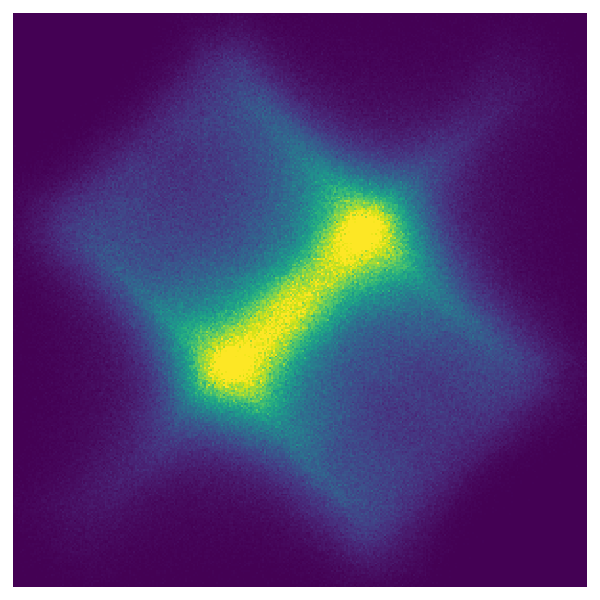} & 
      \includegraphics[width=0.0632\textwidth]{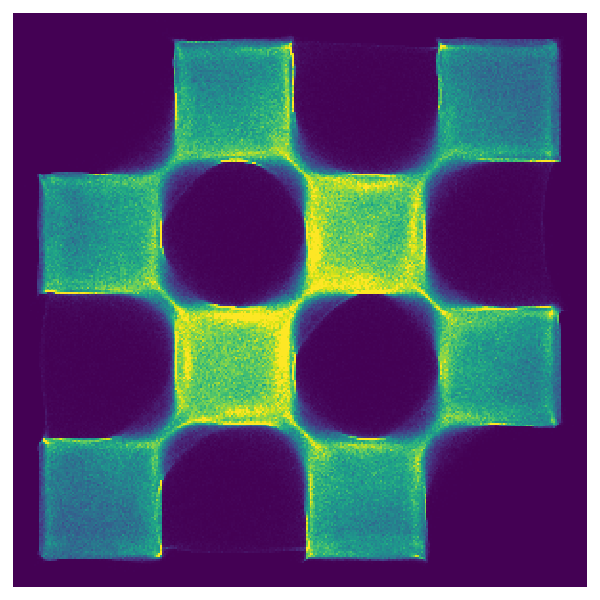} &
      \includegraphics[width=0.0632\textwidth]{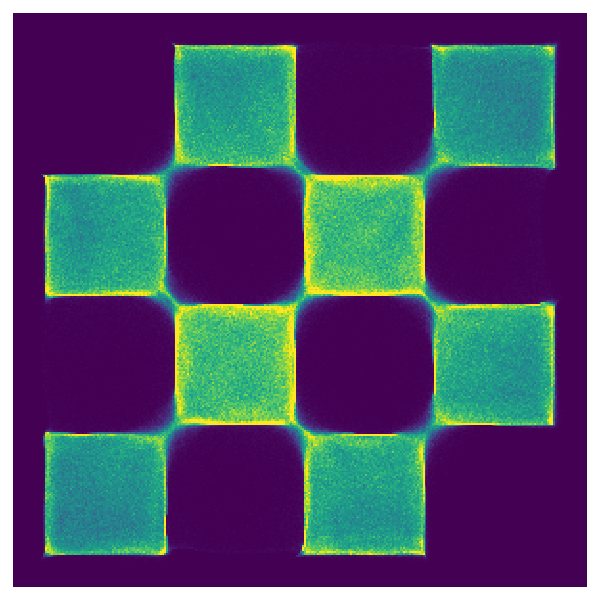} &
      \includegraphics[width=0.0632\textwidth]{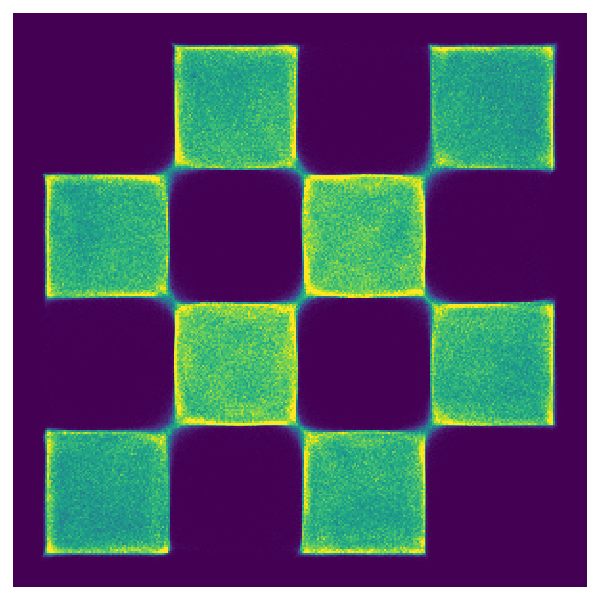}  \vspace{-3pt} \\
      
    \scriptsize{Stochastic Interpolants} & & & & &  \\
    \includegraphics[width=0.68\textwidth]{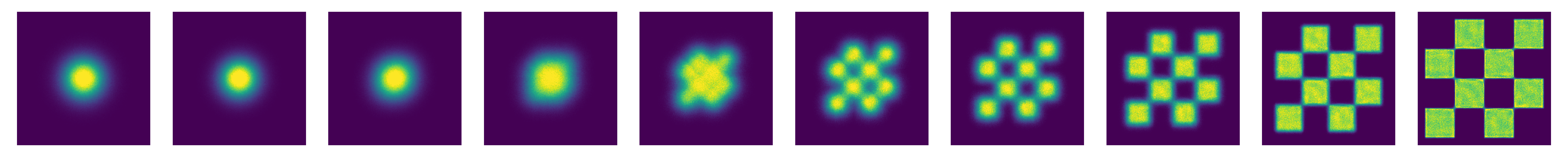} &
    \rot{\tiny{FM \textsuperscript{w}/ COT}} &
      \includegraphics[width=0.0632\textwidth]{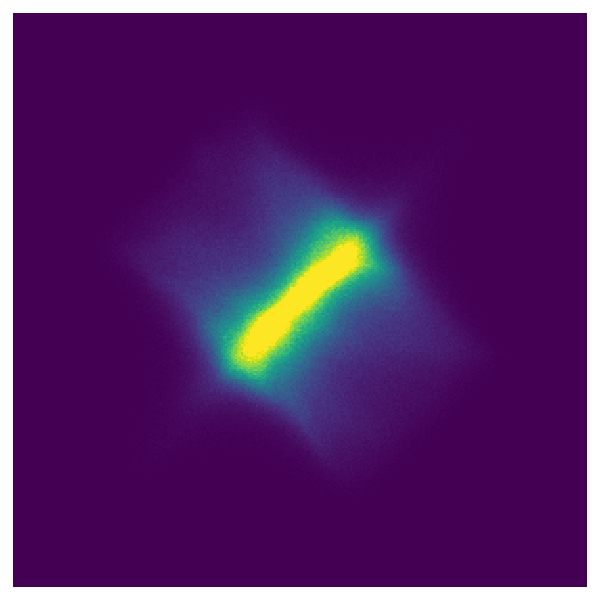} & 
      \includegraphics[width=0.0632\textwidth]{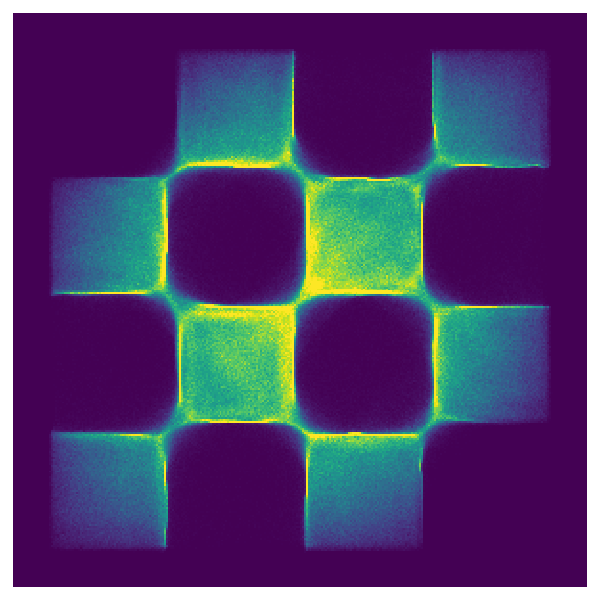} &
      \includegraphics[width=0.0632\textwidth]{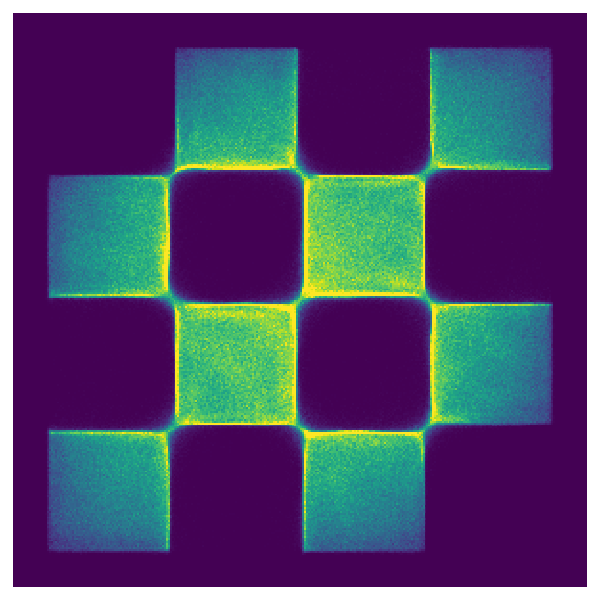} &
      \includegraphics[width=0.0632\textwidth]{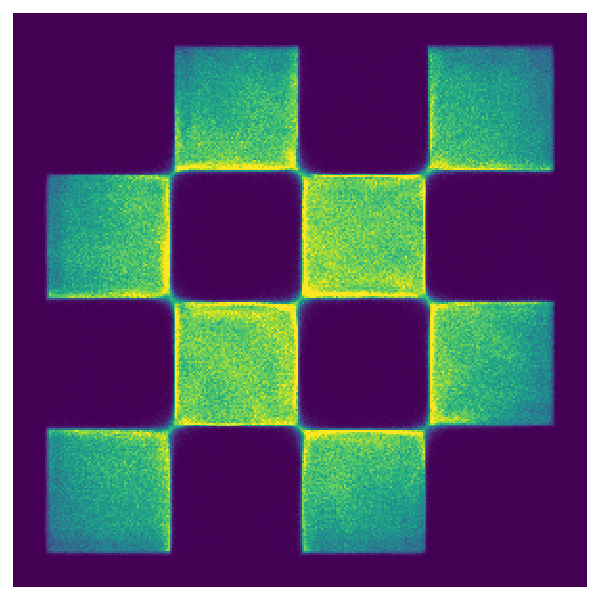} \vspace{-3pt} \\
      
    \scriptsize{Flow Matching \textsuperscript{w}/ Cond-OT } & & & & & \\
    \includegraphics[width=0.68\textwidth]{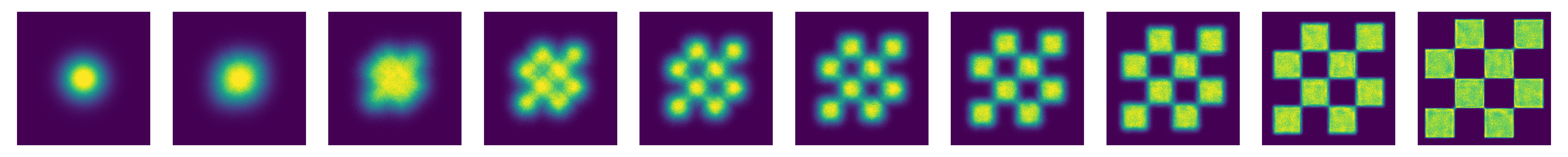} &
    \rot{\tiny{FM \textsuperscript{w}/ KO}} &
      \includegraphics[width=0.0632\textwidth]{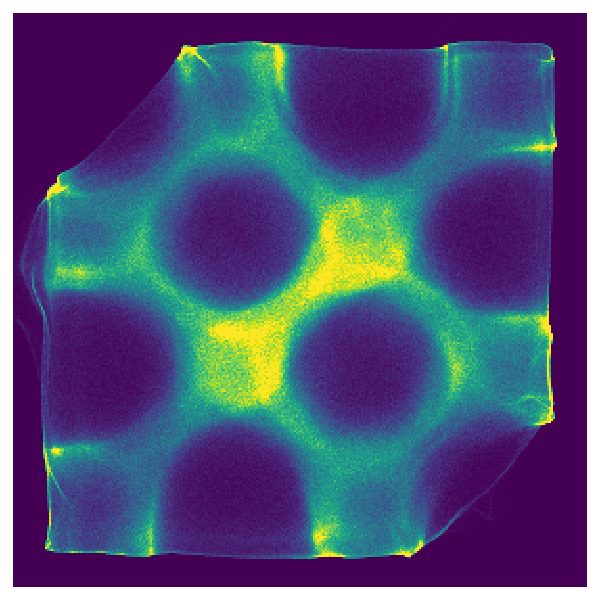} & 
      \includegraphics[width=0.0632\textwidth]{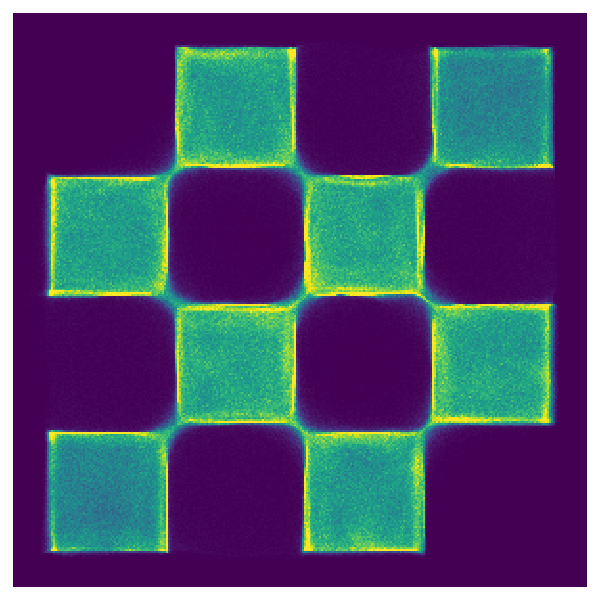} &
      \includegraphics[width=0.0632\textwidth]{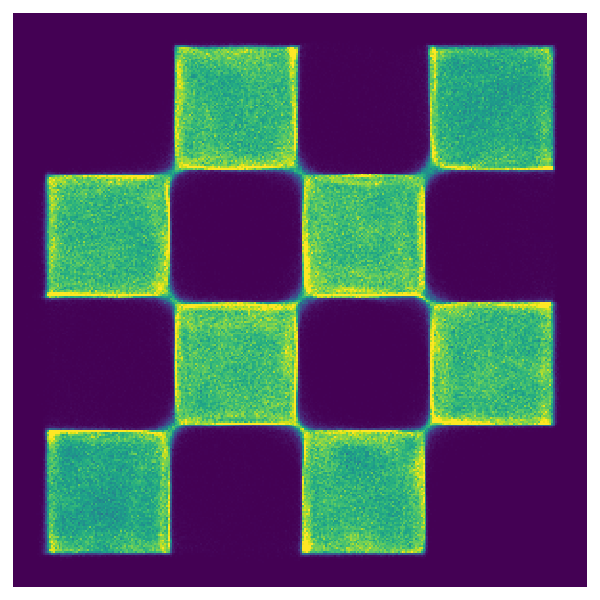} &
      \includegraphics[width=0.0632\textwidth]{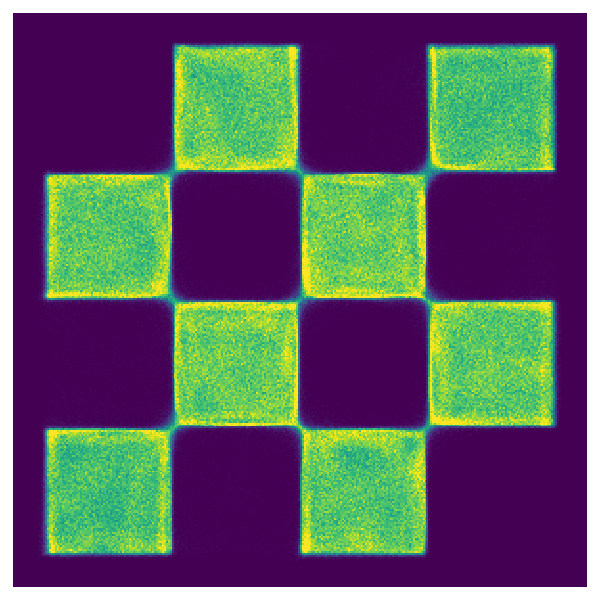} \vspace{-3pt}  \\
      \scriptsize{Flow Matching \textsuperscript{w}/ KO } & & {\scriptsize NFE=2} & {\scriptsize NFE=6} & {\scriptsize NFE=10} & {\scriptsize NFE=16}
\end{tabular}
    \caption{Left: Trajectories of trained CNF models with Stochastic Interpolants, Cond-OT, and Kinetic Optimal paths trained with 2D checkerboard data. Right: Results of the same models when sampling with limited number of function evaluations (NFE). This is a case of low dimensional data (\ie, $d=2$) where Cond-OT is not kinetic optimal. 
    }
    \label{fig:2d_checkerboard}
\end{figure*}

\section{Related work}
Continuous Normalizing Flows (CNFs) train a flow modeled by learnable vector field $v_t(x)$ \citep{chen2018neural}, where traditionally $v_t(x)$ was optimized to reduce the likelihod of the train data. Other CNF methods, have tried to train CNFs by combining the Kinetic Energy as a regularization term alongside a likelihood term to push the resulting path toward kinetic optimal solutions \citep{finlay2020train,onken2021ot}; however, these still require log probabilities during training and the combination of the losses does not guarantee to produce a globally kinetic optimal solution, which is the Optimal Transport \cite{villani2009optimal}. Computing the log probabilities and their derivatives during training placed a roadblock on scaling CNFs to high dimensions \citep{grathwohl2018ffjord}. Diffusion models \cite{sohl2015deep,ho2020denoising,song2020score} can be seen as an alternative way to train CNFs with a loss that sidesteps log probabilities and regresses the score of an a-priori defined probability density path $p_t \in \gP$ (see \eqref{e:P}). Recently, \citep{lipman2022flow,albergo2022si,liu2022flow,neklyudov2022action} suggest to train a CNF by directly regressing the generating vector field of a probability density paths $p_t\in \gP$. The probability paths utilized in these methods belong to the family of paths considered in this paper, namely the probability paths that marginalize affine conditional per-sample probabilities, and from which we characterize the kinetic optimal paths. Lastly, \cite{albergo2022si} suggested optimizing the kinetic energy simultaneously to training the CNF, resulting in a challenging high dimensional min-max problem; we consider finding the kinetic optimal path used as supervision in the CNF training in separation.

\begin{figure*}
    \centering
    \begin{tabular}{@{}c@{\hspace{5pt}}c@{\hspace{2pt}}c@{\hspace{2pt}}c@{}}
         \quad \ \  {\scriptsize 2D }
         & \ \ \
         {\scriptsize CIFAR10}
         &  \ \ 
         {\scriptsize ImagetNet-32}
         & \ \ 
         {\scriptsize ImageNet-64} \\
         \includegraphics[width=0.23\textwidth]{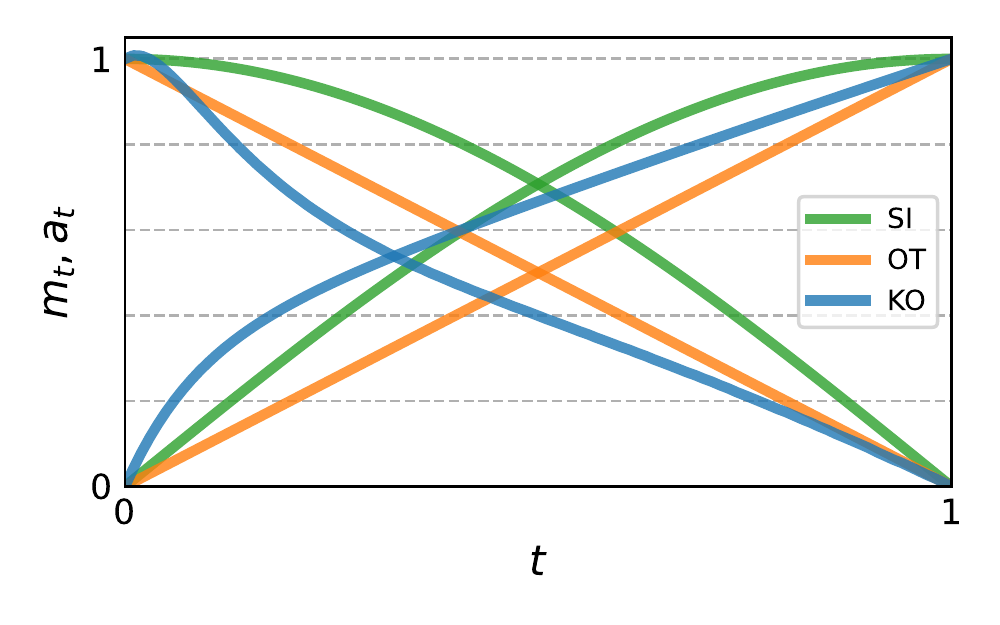} & \includegraphics[width=0.23\textwidth]{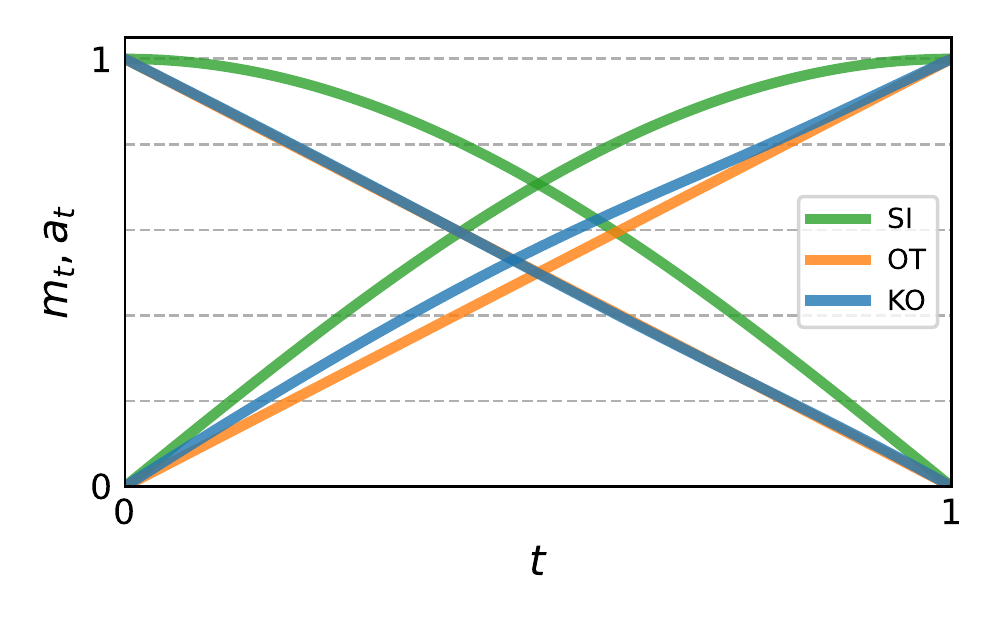} &
         \includegraphics[width=0.23\textwidth]{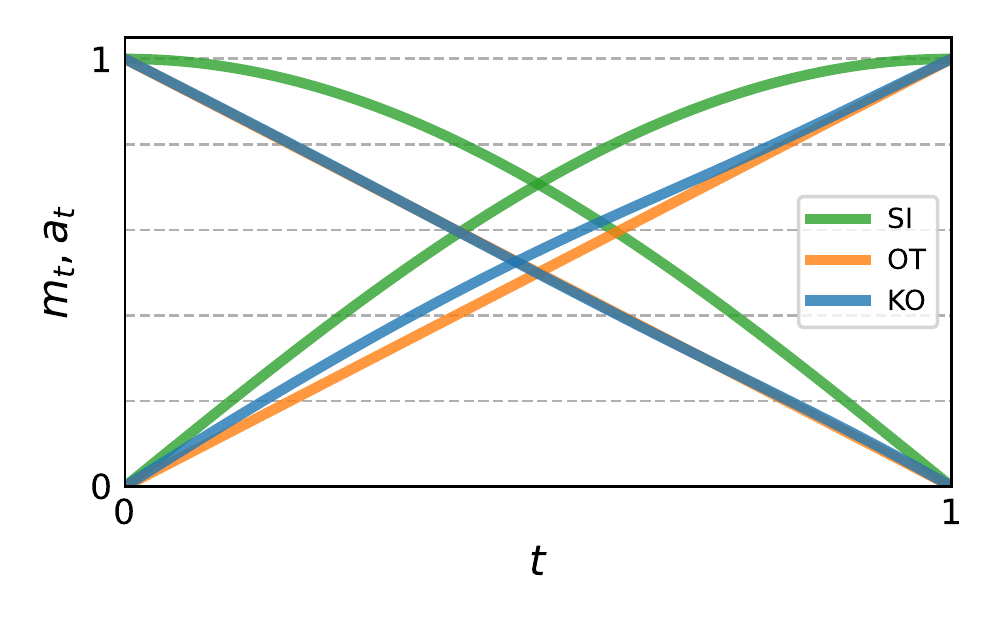} &
         \includegraphics[width=0.23\textwidth]{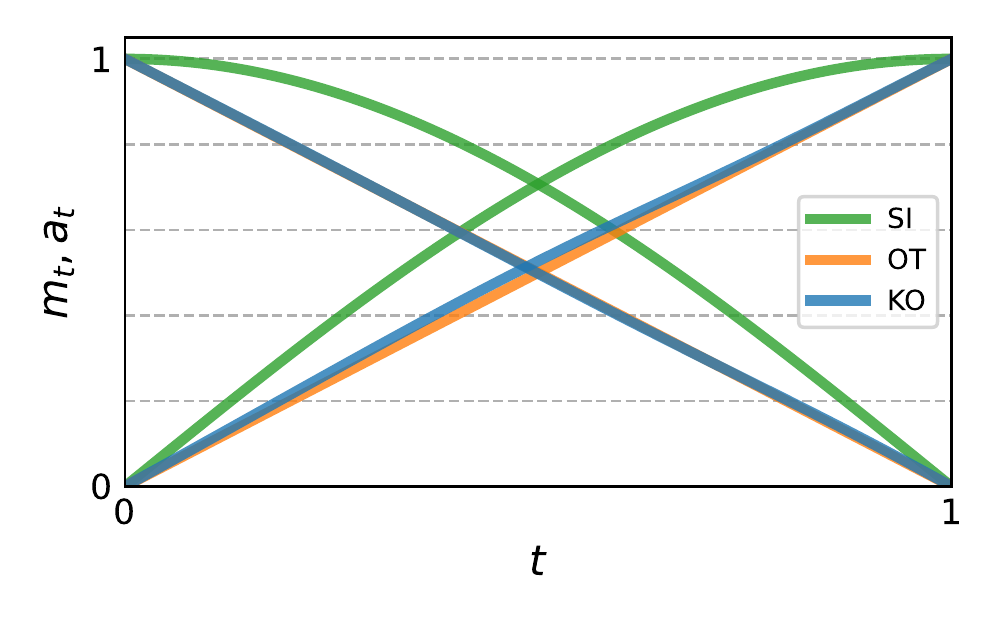} \vspace{-5pt} 
    \end{tabular}
    \caption{The kinetic optimal paths (KO, blue), computed with our algorithm for all datasets considered in the paper, are shown alongside the SI path (green) and Cond-OT path (orange). The KO paths get closer to Cond-OT at high dimensions; for ImageNet-64 the KO and Cond-OT almost coincide.\vspace{10pt} }
    \label{fig:optimal_path}
\end{figure*}

\section{Experiments}
In this section we: (i) approximate the data separation function $\lambda$ numerically for different real-world datasets, and approximate the corresponding Kinetic Optimal (KO) paths for these datasets. (ii) We validate our theory of the convergence of $\lambda\too 1$ in high dimensions for real-world data. (iii) We empirically test our KO paths compared to paths defined by the conditional probabilities of Cond-OT \citep{lipman2022flow}, IS \citep{albergo2022si}, and DDPM \cite{ho2020denoising}. In terms of datasets, we have been experimenting with a 2D dataset (checkerboard), and the image datasets CIFAR10, ImageNet-32, and Imagenet-64.

\begin{figure}[h!]
    \centering
    \begin{tabular}{@{\hspace{0pt}}c@{\hspace{0pt}}c@{}}         
         \includegraphics[width=0.50\columnwidth]{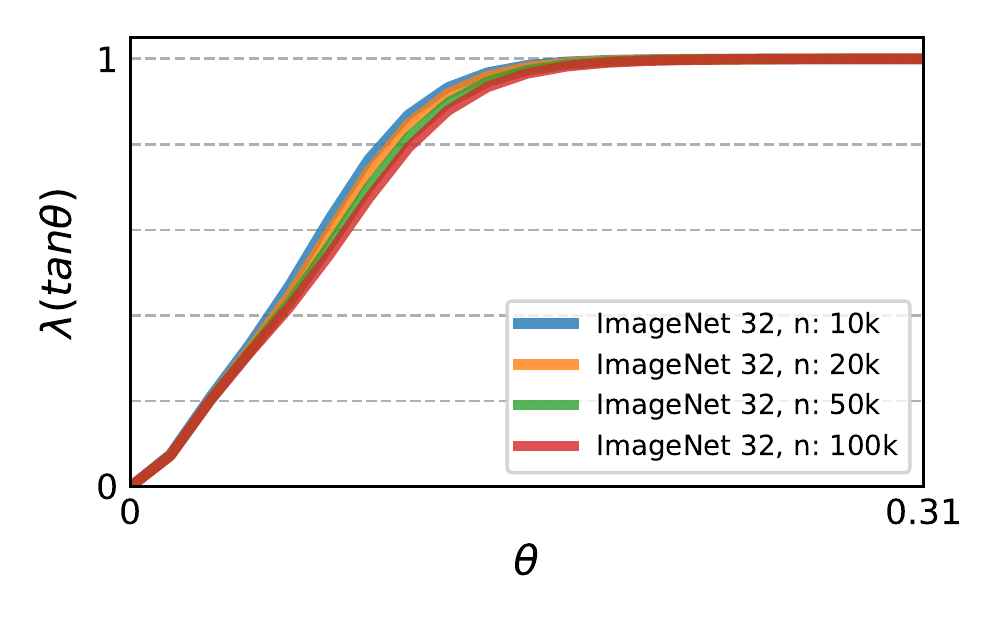} &
         \includegraphics[width=0.50\columnwidth]{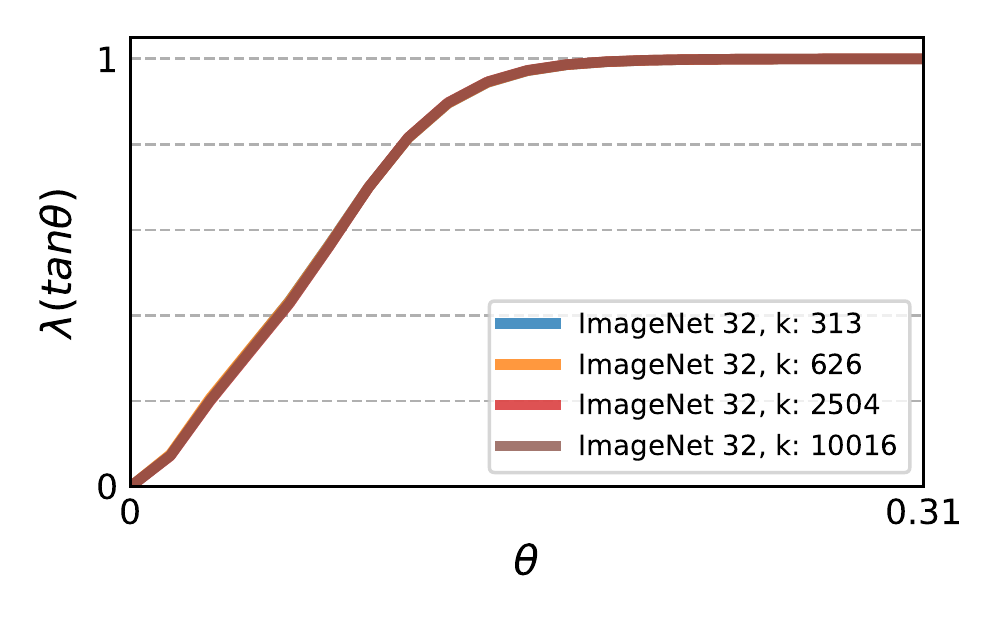} \vspace{-10pt} 
    \end{tabular}
    \caption{Evaluation of data separation estimation: (Left) shows $\hat{\lambda}$ for Imagenet-32 with $n=10K,20K,50K,100K$ data samples, and $k=10K$ samples from $\gN(0,I)$. (Right) shows $\hat{\lambda}$ for Imagenet-32 with $n=50k$ data samples and $k=0.3K,0.6K,2.5K,10K$ samples from $\gN(0,I)$. Note the $x$-axis is limited to $[0,0.31]$ to zoom in on the relevant part. \vspace{-10pt} }
    \label{fig:lambdad_eval}
\end{figure}

\begin{table*}[h]\centering
\renewcommand{\tabcolsep}{3.pt}
\resizebox{\textwidth}{!}{
\begin{tabular}{l c c c c c c c c c c c c c c c  }\toprule 
  & \multicolumn{7}{c}{\bf CIFAR-10} &
  & \multicolumn{7}{c}{\bf ImageNet 32$\times$32} \\
\cmidrule(lr){2-8} \cmidrule(lr){10-16} 
Model & {NLL$\downarrow$} & {KE$\downarrow$} & {FID$_{20}$$\downarrow$} & {FID$_{50}$$\downarrow$} & {FID$_{100}$$\downarrow$} & {FID$_{Adpt}$$\downarrow$} & {NFE$_{Adpt}$$\downarrow$}
& & {NLL$\downarrow$} & {KE$\downarrow$} & {FID$_{20}$$\downarrow$} &{FID$_{50}$$\downarrow$} & {FID$_{100}$$\downarrow$} &{FID$_{Adpt}$$\downarrow$} & {NFE$_{Adpt}$$\downarrow$} \\
\cmidrule(r){1-1}\cmidrule(lr){2-8} \cmidrule(lr){10-16}
\textit{\small Ablations}\\
\;\; DDPM   & 
$3.11$ & $2.30$ & $38.5 \pm 0.2$ & $18.0 \pm 0.0$ & $12.5 \pm 0.1$ & $7.4 \pm 0.1$ & $344 \pm 0$ & & 
$3.61$ & $2.22$ & $66.1 \pm 1.2$ & $15.7 \pm 0.3$& $8.6 \pm 0.3$ & $5.1 \pm 0.1$ & $333 \pm 2$ \\
\;\; SI & 
$3.00$ & $1.30$ & $10.4 \pm 0.1$ & $7.4 \pm 0.1$ & $7.4 \pm 0.1$ & $7.5 \pm 0.0$ & $ 165 \pm 0$ & & 
$3.58$ & $1.27$ & $23.4 \pm 0.4$ & $9.0 \pm 0.3$ & $6.7 \pm 0.2$ & $5.3 \pm 0.2$ & $158 \pm 2$ \\
\cmidrule(r){1-1}\cmidrule(lr){2-8} \cmidrule(lr){10-16}
\;\; FM \textsuperscript{w}/ OT path    & 
$2.97$ & $1.09$ & $9.9 \pm 0.0$ & $6.8 \pm 0.0$ & $6.8 \pm 0.0$ &$6.9 \pm 0.0$ & $146 \pm 0$ & &  
$3.60$ & $1.08$ & $9.7 \pm 0.1$ & $6.5 \pm 0.1$ & $5.6 \pm 0.0$ &$5.0 \pm 0.1$ & $145 \pm 0$ \\

\;\; FM \textsuperscript{w}/ KO path & 
$2.97$ & $\bf{1.08}$ & $9.1 \pm 0.0$ & $6.2 \pm 0.1$ & $6.2 \pm 0.1$ &$ 6.1 \pm 0.0$ & $147 \pm 0$ & & 
$3.57$ & $\bf{1.07}$ & $9.8 \pm 0.3$ & $ 6.6 \pm 0.4$ & $5.8 \pm 0.3$ &$5.1 \pm 0.3$ & $129 \pm 1$ \\
\bottomrule
\end{tabular}
}

\caption{For the datasets of CIFAR-10 and ImageNet-32 we log: Negative log likelihoods in BPD units; Kinetic Energy (KE); quality of generated samples in FID computed with Euler method for 20,50,100 steps and the adaptive method DOPRI5; and Number of Function Evaluation (NFE) with the adaptive solver. The KE, FID and NFE are averaged over three checkpoints (epochs 1990, 1995, 2000 for CIFAR-10; and epochs 390, 395, 400 for ImageNet-32). \vspace{+5pt} }
\label{tab:img_results}
\end{table*}

\subsection{Approximation of data separation function}\label{ss:approx_of_lambda}
A simple approach to approximate $\lambda$ in \eqref{eq:lambda_s} for a given dataset $x_i\in\Real^d$, $i\in [n]$, is to consider normalized finite data $q$ (see Section \ref{s:ko_paths_in_high_dim}), \ie, giving each $x_i$ the probability $n^{-1}$, normalized in the sense of \eqref{e:q_mean} and \eqref{e:q_var}. In this case $\lambda$ takes the form of \eqref{eq:lambda_finite}, and we use Monte-Carlo estimation of the expectation w.r.t.~$x\sim \rho_s(x|x_j)$, $j\in [n]$, where we can express $x$ as $x = x_j + s^{-1} z$, and $z\sim \gN(0,I)$. We sample $z_l \sim \gN(0,I)$, $l\in [k]$ i.i.d.~and our estimator becomes
\begin{equation}\label{eq:lambda_estimator}
    \hat{\lambda}(s) = \frac{1}{nd}\sum_{l=1}^k\sum_{j=1}^{n}\norm{\sum_{i=1}^{n} x_i\rho_s(x_i|x_j +s^{-1}z_l)}^2.
\end{equation}
Unfortunately, computing this sum requires the pairwise distances $\norm{x_i-x_j}^2$ and therefore direct computation scales quadratically in $n$. To visualize $\hat{\lambda}:[0,\infty)\too [0,1]$ compactly we instead show $\hat{\lambda}\circ \tan:[0,\pi/2]\too [0,1]$, so the $x$-axis is annotated with $\theta$. Figure \ref{fig:lambdad_eval} (left) shows the difference in the estimated lambda (computed with \eqref{eq:lambda_estimator}) for ImageNet-32 using an increasing number of samples from $q$ starting at $n=10K$ to $n=100K$ while keeping the number of samples from $\gN(0,I)$ constant at $k=10K$. As can be inspected in this figure, beyond $50K$ the differences are rather small. Note that we display the range $\theta\in [0,0.3]$ since beyond that all $\hat{\lambda}$ approximations practically reach $1$, which is also the upper bound for $\lambda$. Similarly, in Figure \ref{fig:lambdad_eval} (right), we keep the number of samples from $q$ constant at $n=50K$ and vary the number of samples $k$ from $\gN(0,I)$, $k\in [0.3K,10K]$, again noticing that the differences between the estimators are negligible. 
We approximate $\hat{\lambda}(\tan(\theta))$ on equispaced grid points $\theta_i = 0.01\frac{\pi}{2}i$, $i\in [0,100]$ and fit a cubic-spline to this data to be used for estimating $\hat{\lambda}(\tan \theta)$ at arbitrary $\theta\in[0,\frac{\pi}{2}]$. 

\paragraph{Optimizing for kinetic optimal paths.} Given an estimator of $\hat{\lambda}$, we minimize the KE energy in \eqref{eq:ke_polar_form} as a functional of $(r_t,\theta_t)$. We model $r_t$ with the derived one-parameter solutions family in Theorem \ref{thm:main_optimal_path}. For $\theta_t$, we use a 10 parameters neural network, and optimize the KE with general purpose gradient descent, see details in \Cref{a:implementation-details}. Lastly, we apply the coordinate change in \eqref{e:polar_change} and transform the optimal $r_t,\theta_t$ back to $a_t,m_t$ that is used for training. Figure \ref{fig:optimal_path} shows the kinetic optimal paths computed for all datasets considered in the paper. It is already obvious in this figure that as the dimension of the data increases the kinetic optimal paths get closer to the Cond-OT path, anticipated in Theorem \ref{thm:ke_squeeze}. We discuss this next.

\subsection{Kinetic optimal paths in high dimensions}
The high dimensional phenomenon presented in Theorem \ref{thm:lambda_bound}, namely that $\lambda\too 1$ for discrete data in high dimensions, does in fact manifests in real-world high dimensional datasets such as ImageNet, even though the ratio $\frac{n}{\sqrt{d}}$ is not small for the relevant dimensions. Figure \ref{fig:lambdad} (left) shows $\hat{\lambda}$ for ImageNet-8/16/32/64, which have dimensions 192/768/3072/12,228 (resp.) computed with $n=50K$ and $k=10K$. We provide a detailed run-times discussion in Appendix \ref{a:runtime}. As can be seen in this figure, as the image dimension increases indeed $\lambda\too 1$. In particular, for ImageNet-64, $\hat{\lambda}$ converged to $1$ for every $\theta \in [0.08, \frac{\pi}{2}]$. (Middle),(Right) Show the corresponding kinetic optimal paths $a_t$, $m_t$ (resp.); the  convergence of the KO paths to Cond-OT as data dimension increases is suggested by theorem \ref{thm:ke_squeeze} and the fact that Cond-OT is the minimizer of the CKE. In particular, the KO optimal path for ImageNet-64 is practically identical to the Cond-OT linear path.

\subsection{Flow Matching with kinetic optimal paths}\label{ss:FM_with_KO} 
We also experimented with kinetic optimal (KO) paths for training a continuous normalizing flows. We use the Flow Matching (FM) approach \citep{lipman2022flow} where (in the notations of \eqref{e:loss}),
\begin{equation}\label{e:fm_loss}
    \gL(\theta)  = \E_{t,p_t(x)} \norm{v_t(x;\theta) - u_t(x)}^2,
\end{equation}
where $v_t$ is a neural network with learnable parameters $\theta$, $p_t(x)$, $u_t(x)$ are as defined in equations \ref{e:p_t}, \ref{e:u_t} (resp.) and $t$ is uniform in $[0,1]$. The actual training objective is
\begin{equation}\label{e:cfm_loss}
    \gL(\theta) = \E_{t,p_t(x|x_1),q(x_1)}\norm{v_t(x;\theta)-u_t(x|x_1)}^2,
\end{equation}
where $p_t(x|x_1)$ is as defined in \eqref{eq:p_cond} and $u_t(x|x_1)$ as defined in \ref{e:u_t_cond}, which is shown in \citep{lipman2022flow} to have equivalent gradients to the loss in \eqref{e:fm_loss}. For $a_t,m_t$ we use the optimal solutions of the approximated kinetic energy (see Section \ref{ss:approx_of_lambda}); see Figure \ref{fig:optimal_path} for visualizations of these KO paths.

\paragraph{2D data.}
We tested FM-KO first on 2D data of a checkerboard distribution, often used for low dimensional test-bed for CNF models. Figure \ref{fig:2d_checkerboard} compares FM-KO, with FM-Cond-OT, and IS: On the left we show the time sequence, $t\in [0,1]$, of the generated probabilities starting from the standard noise $p_0=p$. Note that KO is distributed more evenly in time. On the right we show sampling with limited number of function evaluations (NFE) using the Euler solver. Note that KO outperforms the baselines for low NFE counts. In this case the data dimension is low ($d=2$) and the dataset contains many (more then 1M) data points, which is outside the scope of our Theory that is effective as $n/\sqrt{d}$ is smaller. Indeed, in this case Cond-OT is not kinetic optimal and is being outperformed in terms of NFE counts.

\begin{figure*}[h]
    \centering
    \begin{tabular}{@{\hspace{0pt}}c@{\hspace{8pt}}c@{\hspace{8pt}}c@{\hspace{0pt}}}
         \multicolumn{1}{p{140pt}}{\scriptsize \hspace{-11pt}NFE:\hspace{2pt}6\hspace{26pt}8\hspace{24pt}12\hspace{22pt}20\hspace{20pt}100}&\multicolumn{1}{p{140pt}}{\scriptsize \hspace{-11pt}NFE:\hspace{2pt}6\hspace{26pt}8\hspace{24pt}12\hspace{22pt}20\hspace{20pt}100}& \multicolumn{1}{p{140pt}}{\scriptsize \hspace{-10pt}NFE:\hspace{2pt}6\hspace{26pt}8\hspace{24pt}12\hspace{22pt}20\hspace{20pt}100}\\
         \includegraphics[width=0.30\textwidth]{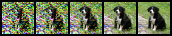} & \includegraphics[width=0.30\textwidth]{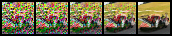} &
         \includegraphics[width=0.30\textwidth]{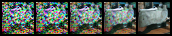} \vspace{-3pt} \\
         & \scriptsize{DDPM} & \vspace{1pt} \\
         \includegraphics[width=0.30\textwidth]{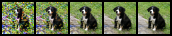} & \includegraphics[width=0.30\textwidth]{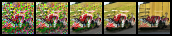} &
         \includegraphics[width=0.30\textwidth]{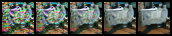} \vspace{-2pt} \\
         & \scriptsize{Stochastic Interpolants} & \vspace{1pt}\\
         \includegraphics[width=0.30\textwidth]{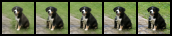} & \includegraphics[width=0.30\textwidth]{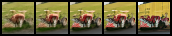} &
         \includegraphics[width=0.30\textwidth]{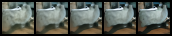} \vspace{-2pt}\\
         & \scriptsize{Flow Matching \textsuperscript{w}/ Cond-OT } & \vspace{1pt} \\
         \includegraphics[width=0.30\textwidth]{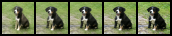} & \includegraphics[width=0.30\textwidth]{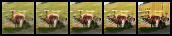} &
         \includegraphics[width=0.30\textwidth]{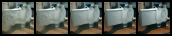} \vspace{-2pt} \\
         & \scriptsize{Flow Matching \textsuperscript{w}/ KO } & \vspace{1pt} \\
         \vspace{-15pt}
    \end{tabular}    \caption{Generated images of CNF models trained with DDPM, SI, Cond-OT and Kinetic Optimal paths on \textit{face-blurred} ImageNet-32 data. Generation is done with the Euler method, where in each column all images are generated from the same noise but with a different number of steps (equivalent to NFE for Euler solver) as indicated; 6, 8, 12, 20 and 100 steps.\vspace{-10pt}} 
    \label{fig:imagenet32_image_vs_nfe}
\end{figure*}

\paragraph{Image datasets.}
Lastly, we tested KO optimal paths on CIFAR and ImageNet-32 (for ImageNet-64 KO paths practically coincides with Cond-OT). For ImageNet we use the official \textit{face-blurred} ImageNet and downsample to $32 \times 32$ using an open source preprocessing script \cite{chrabaszcz2017downsampled}. Quantitative results are summarizes in Table \ref{tab:img_results} while qualitative results are depicted in Figure \ref{fig:imagenet32_image_vs_nfe}; for each method we evaluated Negative Log Likelihoods in Bit-Per-Dimension (BPD); kinetic energy; and FID computed for samples generated by Euler method with 20, 50, 100 steps and the adaptive solver DOPRI5. For FID and NFE (adaptive) we averaged $3$ different epochs at the final stage of training. For these datasets, we find that KO and Cond-OT paths already produce similar KE, better than the other baselines considered. Figure \ref{fig:fid_nfe} depicts curves of generated image quality (FID) as a function of number of function evaluation (NFE) for CIFAR10 ImageNet-32. We provide example of generated images from the KO model in Appendix \ref{a:additional_figures}. 

\begin{figure}
    \centering
    \begin{tabular}{@{\hspace{0pt}}c@{\hspace{0pt}}c@{}}
    \quad \ \  {\scriptsize CIFAR10 }
         & \ \ \
         {\scriptsize ImageNet 32}\\
         \includegraphics[width=0.50\columnwidth]{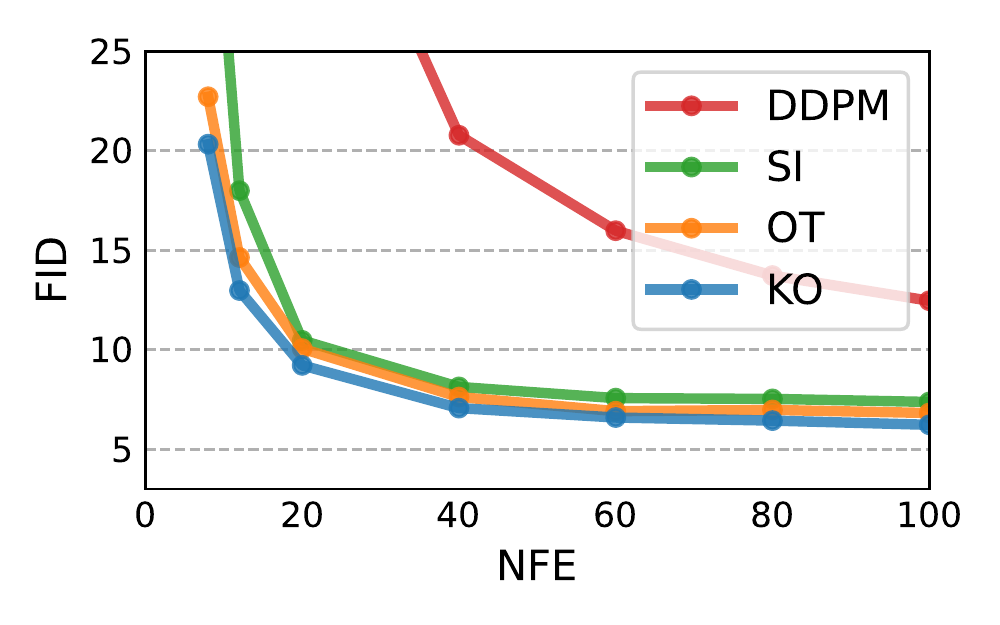} &
         \includegraphics[width=0.50\columnwidth]{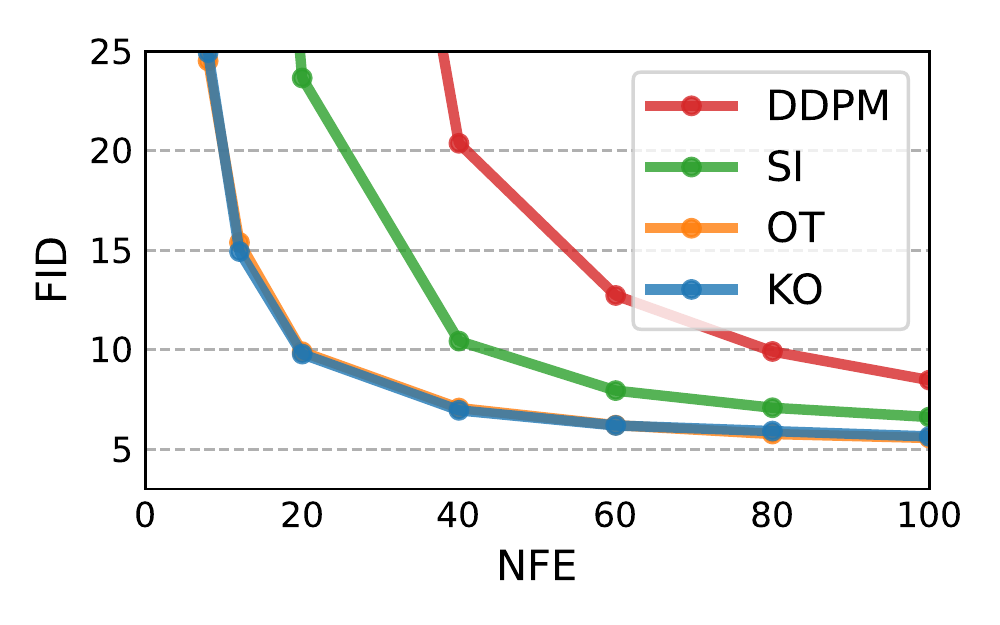} \vspace{-10pt} 
    \end{tabular}
    \caption{FID versus NFE for models trained on the CIFAR-10 (left) and ImageNet-32 (right) datasets. Samples for FID computation are generated with Euler solver.} \vspace{-16pt}
    \label{fig:fid_nfe} 
\end{figure}

\section{Conclusions}
In this paper we investigated the space of tractable probability paths, which are used to supervise generative models' training, and searched for an optimal path that minimizes the Kinetic Energy. We started with identifying a more tractable form for the kinetic energy that incorporates the training data using a simple, one dimensional data separation function $\lambda$. We then characterized the kinetic optimal solutions and suggested a method to compute them using a numerical estimation to $\lambda$. The kinetic optimal paths improved performance of the respectively trained generative models mostly in low to medium data dimensions. Lastly, we demonstrated through a theoretical analysis accompanied with empirical experiments that as the dimension of the data increases, the simple Cond-OT probability paths are becoming kinetic optimal. That is, the kinetic energy of paths defined with the Cond-OT conditional is converging to the optimal kinetic energy in a family of probability paths. This result has several implications for future research: First, further searching the class Gaussian paths defined by affine conditionals (equations \ref{e:p_t}, \ref{eq:p_cond} and \ref{e:A}) for useful probability path might be futile. In fact, in high dimensions, Cond-OT might be a close to optimal choice within this class, for most datasets.  
Finding probability path with even lower Kinetic Energy would entail abandoning the affine conditional probabilities and searching within a broader class. For example, replacing the affine model (\eqref{eq:p_cond}) with a model that has more complex dependency on $x_1$ might allow further improvement of the kinetic energy and consequently the trained generative model. 

\subsection*{Acknowledgements}

NS was supported by a grant from Israel CHE Program for
Data Science Research Centers.



\bibliography{kinetic}
\bibliographystyle{icml2023}

\newpage
\appendix
\onecolumn

\section{Auxiliary computations and proofs}

\subsection{Kinetic energy and $\lambda$}
\label{aa:ke_comp}

\begin{proof}[Proof of \eqref{e:ke_cond_closed_form}.]
We compute the Conditional Kinetic Energy (CKE) $\gE_c(a_t,m_t)$. By definition, it is
\begin{equation}
    \gE_c(a_t,m_t) = \frac{1}{d} \E_{t,q(x_1),p_t(x|x_1)}\norm{u_t(x|x_1)}^2,
\end{equation}
where $u_t(x|x_1)$ is as defined in \eqref{e:u_t_cond}. That is
\begin{equation}
    u_t(x|x_1) = \alpha_t x + \beta_t x_1
\end{equation}
and 
\begin{equation}
    \alpha_t = \frac{\dot{m}_t}{m_t}, \quad \beta_t = \dot{a}_t-a_t \frac{\dot{m}_t}{m_t}.
\end{equation}
We remind that $p_t(x|x_1) = \gN(x|a_tx_1,m_t^2I)$ and $p(x) = \gN(x|0,I)$. Hence, for $x\sim p(x)$, $m_tx +a_tx_1 \sim p_t(x|x_1)$. We can now compute the expectation w.r.t.~$p_t(x|x_1)$ and $q(x_1)$. That is,
\begin{align}\label{e:cke_eval}
    \frac{1}{d} \E_{t,q(x_1),p_t(x|x_1)}\norm{u_t(x|x_1)}^2 &= \frac{1}{d} \E_{t,q(x_1),p(x)}\norm{u_t(m_tx+a_tx_1|x_1)}^2\\ 
    &= \frac{1}{d} \E_{t,q(x_1),p(x)}\norm{\dot{m}_t x + \dot{a}_t x_1}^2\\
    &= \frac{1}{d} \E_{t,q(x_1),p(x)}\parr{\dot{m}_t^2 \norm{x}^2 +2 \dot{m}_t\dot{a}_t x\cdot x_1 + \dot{a}_t^2 \norm{x_1}^2}\\
    &= \frac{1}{d} \E_{t,q(x_1)}\parr{\dot{m}_t^2d + \dot{a}_t^2 \norm{x_1}^2}\\
    &= \frac{1}{d} \E_{t}\parr{\dot{m}_t^2d + \dot{a}_t^2 d}\\
    &= \int_0^1 (\dot{m_t}^2 +\dot{a_t}^2) dt,
\end{align}
where before the last equality we used \eqref{e:q_var} to compute the expectation on $q(x_1)$.

\emph{Proof of equations \ref{e:ke_explicit} and \ref{eq:lambda_s}.}
We compute the Kinetic Energy (KE) $\gE(a_t,m_t)$, proving \eqref{e:ke_explicit}. By definition, it is
\begin{equation}
    \gE(a_t,m_t) = \frac{1}{d}\E_{t,p_t(x)} \norm{u_t(x)}^2,
\end{equation}
where $u_t(x)$ is as defined in \eqref{e:u_t}. Equivalently, using Bayes' rule, we can write it as
\begin{equation}
     u_t(x) = \E_{p_t(x_1|x)}u_t(x|x_1)
\end{equation}
and 
\begin{equation}
    p_t(x_1|x) = \frac{p_t(x|x_1) q(x_1)}{p_t(x)}.
\end{equation}
Now we can calculate the KE as follows
\begin{align}\label{e:ke_eval}
     \frac{1}{d}\E_{t,p_t(x)} \norm{u_t(x)}^2 &=\frac{1}{d}\E_{t,p_t(x)} \norm{\E_{p_t(x_1|x)}u_t(x|x_1)}^2\\
     &= \frac{1}{d}\E_{t,p_t(x)} \norm{\alpha_tx + \beta_t\E_{p_t(x_1|x)}x_1}^2\\
     &= \frac{1}{d}\E_{t,p_t(x)} \parr{\alpha_t^2\norm{x}^2 +2\alpha_t\beta_tx\cdot \E_{p_t(x_1|x)}x_1  + \beta_t^2\norm{\E_{p_t(x_1|x)}x_1}^2}\\
     &= \frac{1}{d}\E_{t,p_t(x)} \parr{\alpha_t^2\norm{x}^2 +2\alpha_t\beta_tx\cdot \E_{p_t(x_1|x)}x_1  +\beta_t^2d -\beta_t^2d + \beta_t^2\norm{\E_{p_t(x_1|x)}x_1}^2}\\
     &= \frac{1}{d}\E_{t,p_t(x)} \parr{\alpha_t^2\norm{x}^2 +2\alpha_t\beta_tx\cdot \E_{p_t(x_1|x)}x_1  +\beta_t^2d} - \frac{1}{d}\E_{t,p_t(x)}\parr{\beta_t^2d - \beta_t^2\norm{\E_{p_t(x_1|x)}x_1}^2}
\end{align}
We perform the expectation over $p_t(x)$ for each term on the r.h.s.~separately. We invoke Bayes' rule again, which gives
\begin{equation}
    p_t(x)p_t(x_1|x) = q(x_1)p_t(x|x_1).
\end{equation}
Hence, the first term is
\begin{align}
    \frac{1}{d}\E_{t,p_t(x)} \parr{\alpha_t^2\norm{x}^2 +2\alpha_t\beta_tx\cdot \E_{p_t(x_1|x)}x_1  +\beta_t^2d} &= \frac{1}{d}\E_{t,p_t(x),p_t(x_1|x)} \parr{\alpha_t^2\norm{x}^2 +2\alpha_t\beta_tx\cdot x_1  +\beta_t^2d}\\
    &= \frac{1}{d}\E_{t,q(x_1),p_t(x|x_1)} \parr{\alpha_t^2\norm{x}^2 +2\alpha_t\beta_tx\cdot x_1  +\beta_t^2d}\\
    &= \frac{1}{d}\E_{t,q(x_1),p_t(x|x_1)} \parr{\alpha_t^2\norm{x}^2 +2\alpha_t\beta_tx\cdot x_1  +\beta_t^2\norm{x_1}^2}\\
    &= \frac{1}{d}\E_{t,q(x_1),p_t(x|x_1)}\norm{\alpha_tx + \beta_tx_1}^2\\
    &= \frac{1}{d}\E_{t,q(x_1),p_t(x|x_1)}\norm{u_t(x|x_1)}^2\\
    &= \gE_c(a_t,m_t),
\end{align}
where we also used \eqref{e:q_var} for the expectation of $\norm{x_1}^2$. Next, we define $\lambda$ 
\begin{equation}
    \lambda\parr{\frac{a_t}{m_t}} = \frac{1}{d}\E_{p_t(x)}\norm{\E_{p_t(x_1|x)}x_1}^2.
\end{equation}
So the second term is 
\begin{align}
    \frac{1}{d}\E_{t,p_t(x)}\parr{\beta_t^2d - \beta_t^2\norm{\E_{p_t(x_1|x)}x_1}^2} &= \E_{t}\parr{\beta_t^2 - \beta_t^2\lambda\parr{\frac{a_t}{m_t}}}\\
    &= \int_0^1\beta_t^2\parr{1 - \lambda\parr{\frac{a_t}{m_t}}}dt.
\end{align} 
Finally, substituting the two terms, gives
\begin{align}
    \gE(a_t,m_t) &= \gE_c(a_t,m_t) - \int_0^1  {\beta_t^2 } \parr{1-\lambda\parr{\frac{a_t}{m_t}}} dt.
\end{align}
It is left to show that, indeed, we can write $\lambda$ as a one-variable function. Writing $\lambda$ explicitly gives
\begin{align}
    \frac{1}{d}\E_{p_t(x)}\norm{\E_{p_t(x_1|x)}x_1}^2 &= \frac{1}{d}\int \norm{\int\frac{\tilde{x}_1\gN\parr{x|a_t\tilde{x}_1, m_t^2I}q(\tilde{x}_1)d\tilde{x}_1}{\int \gN\parr{x|a_t\hat{x}_1, m_t^2I}q(\hat{x}_1)d\hat{x}_1}}^2\gN\parr{x|a_tx_1, m_t^2I}q(x_1)dx_1dx.
\end{align}
We will use the following identity: Given $\mu\in \Real^d$ and $\Sigma\in S_+^{d}$, 
\begin{align}
    \gN(x|\mu,\Sigma) &= (\det A) \gN(Ax | A\mu, A\Sigma A^{T}). 
\end{align}
For $A=\frac{1}{a_t}$ and the change of variable $\frac{x}{a_t}\mapsto x$ we get the r.h.s.~below which justifies writing $\lambda$ as a function of $a_t/m_t$:
\begin{align}
    \lambda\parr{\frac{a_t}{m_t}} &= \frac{1}{d}\int \norm{\int\frac{\tilde{x}_1\gN\parr{x\bigg|\tilde{x}_1, \parr{\frac{m_t}{a_t}}^2I}q(\tilde{x}_1)d\tilde{x}_1}{\int \gN\parr{x\bigg|\hat{x}_1, \parr{\frac{m_t}{a_t}}^2I}q(\hat{x}_1)d\hat{x}_1}}^2\gN\parr{x\bigg|x_1, \parr{\frac{m_t}{a_t}}^2I}q(x_1)dx_1dx.
\end{align}
\end{proof}

\newpage
\subsection{The $\gamma(\theta_t)>0$ condition}
\label{aa:gamma_cond}
\begin{proof}[Proof of $\gamma(\theta_t)>0$ condition.]
We show that $\gamma(\theta_t)>0$ is equivalent to:
\begin{equation}
    \lambda(s) > \frac{s^2}{1+s^2}, \quad s\in [0,\infty],
\end{equation}
where $\gamma(\theta_t)$ is as defined in \eqref{eq:ke_polar_form}. That is
\begin{equation}
    \gamma(\theta_t) = 1 - \frac{1-\lambda\parr{\tan(\theta_t)}}{\cos^2(\theta_t)}.
\end{equation}
Thus the condition $\gamma(\theta_t)>0$ holds if and only if
\begin{align}
    \lambda\parr{\tan(\theta_t)} &> 1 - \cos^2(\theta_t)\\
    & = \sin^2(\theta_t)\\
    & = \frac{\sin^2(\theta_t)}{\cos^2(\theta_t)+\sin^2(\theta_t)}\\
    &= \frac{\tan^2(\theta_t)}{1 +\tan^2(\theta_t)}.
\end{align}
Substituting $\tan(\theta_t)= s$, gives the desired inequality, where for $\theta_t \in [0, \frac{\pi}{2}]$, $\tan(\theta_t) \in [0, \infty]$.\\
\end{proof}
\begin{proof}[Proof of $\lambda$ of standard Gaussian.]
Note, for $q(x_1) = \gN\parr{x_1|0,I}$, $\rho_s(x)$ is
\begin{align}
    \rho_s(x) &= \int \gN\parr{x|x_1,s^{-2}I}\gN\parr{x_1|0,I} dx_1\\
    &= \int \gN\parr{x_1|x,s^{-2}I}\gN\parr{x_1|0,I} dx_1\\
    &= \gN\parr{x|0,(s^{-2}+1)I}.
\end{align}
In addition
\begin{align}
    \E_{\rho_s(x_1|x)}x_1 &= \frac{1}{\rho_s(x)}\int x_1 \rho_s(x|x_1)q(x_1)dx_1\\
    &=  \frac{1}{\rho_s(x)}\int x_1 \gN\parr{x|x_1,s^{-2}I}\gN\parr{x_1|0,I}dx_1\\
    &=  \frac{1}{\rho_s(x)}\int (x +s^{-2}\nabla_x)\gN\parr{x|x_1,s^{-2}I}\gN\parr{x_1|0,I}dx_1\\
    &= \frac{1}{\rho_s(x)} (x +s^{-2}\nabla_x)\rho_s(x)\\
    &= \frac{1}{\rho_s(x)} \parr{x +s^{-2}\frac{-x}{s^{-2}+1}}\rho_s(x)\\
    &= x -s^{-2}\frac{x}{s^{-2}+1}\\
    &= \frac{1}{s^{-2}+1}x.
\end{align}
Hence, $\lambda(s)$ is
\begin{align}
    \lambda(s) &= \frac{1}{d}\E_{\rho_s(x)} \norm{ \E_{\rho_s(x_1|x)}x_1}^2\\
    &= \frac{1}{d}\E_{\rho_s(x)} \norm{ \frac{1}{s^{-2}+1}x}^2\\
    &= \parr{\frac{1}{s^{-2}+1}}^2\frac{1}{d}\E_{\rho_s(x)} \norm{x}^2\\
    &= \frac{1}{s^{-2}+1}\\
    &=\frac{s^2}{1+s^2}.
\end{align}
\end{proof}

\newpage
\subsection{Optimization of the Kinetic Energy $\gE$}
\label{a:optimal_solutions}
We derive the solution for the minimization problem of the Kinetic Energy given in \eqref{eq:ke_polar_form}. 
\begin{reptheorem}{thm:main_optimal_path}
    The minimizer of the Kinetic Energy (\eqref{eq:ke_polar_form}) over all conditional probabilities $(r_t,\theta_t)\in \gB$ (\eqref{e:B}) such that $\gamma(\theta_t)>0$ for $t>0$ satisfy 
    \begin{align}
        &r_t = \sqrt{1 -bt  + b t^2},\\
        &\dot{\theta}_t = \frac{1}{1 -bt + bt^2} \sqrt{\frac{b-\frac{1}{4}b^2}{\gamma(\theta_t)}}
    \end{align}
    where $b\in [0,4]$ is determined by the boundary condition on $\theta_t$.
\end{reptheorem}
\begin{proof}[Proof of theorem \ref{thm:main_optimal_path}.]
    The Kinetic energy parameterized by $r_t$ and $\theta_t$ given in \eqref{eq:ke_polar_form} is
        \begin{equation}
        \gE(r_t,\theta_t) = \int_0^1 \dot{r}_t^2 + r_t^2 \dot{\theta}_t^2\gamma(\theta_t)  dt.
    \end{equation}
    The Euler-Lagrange equations for the Kinetic Energy are
    \begin{align}
    & \frac{\partial\gE_t}{\partial r_t} - \frac{d}{dt}\frac{\partial\gE_t}{\partial \dot{r}_t} =0\\
    & \frac{\partial\gE_t}{\partial \theta_t} - \frac{d}{dt}\frac{\partial\gE_t}{\partial \dot{\theta}_t} =0,
\end{align}
where $\gE_t$ is the integrand of $\gE(r_t,\theta_t)$. Carrying out the derivation gives
\begin{align}
    & r_t\dot{\theta}_t^2\gamma(\theta_t) - \ddot{r}_t = 0 \label{eq:el_r}\\
    & 4r_t\dot{r}_t\dot{\theta}_t\gamma(\theta_t) + 2r_t^2\ddot{\theta}_t\gamma(\theta_t) + r_t^2\dot{\theta}_t^2\frac{d}{d\theta_t}\gamma(\theta_t) = 0.
\end{align}
We notice that the second equation can be separated by dividing by $r_t^2\dot{\theta_t}\gamma(\theta_t)$, that gives
\begin{equation}
    \frac{4\dot{r}_t}{r_t} + 2\frac{\ddot{\theta}_t}{\dot{\theta}_t} + \dot{\theta}_t\frac{\frac{d}{d\theta_t}\gamma(\theta_t)}{\gamma(\theta_t)} = 0
\end{equation}
Notice that all terms can be written as derivative in time, that is 
\begin{align}
    & \frac{4\dot{r}_t}{r_t} = 4\frac{d}{dt}\log(r_t)\\
    & 2\frac{\ddot{\theta}_t}{\dot{\theta}_t} =2\frac{d}{dt}\log(\dot{\theta_t})\\
    & \dot{\theta}_t\frac{\frac{d}{d\theta_t}\gamma(\theta_t)}{\gamma(\theta_t)} =\frac{d}{dt}\log(\gamma(\theta_t)).
\end{align}
Preforming the anti-derivative and afterwards exponentiation of both sides, gives
\begin{equation}\label{eq:el_constant_of_motion}
    r_t^4\dot{\theta}_t^2\gamma(\theta_t) = C,
\end{equation}
where $C>0$ is a constant depending on the boundary condition. Substitute back to the first Euler-Lagrange equation, that is \eqref{eq:el_r}, gives
\begin{equation}
    \ddot{r}_t = \frac{C}{r^3} 
\end{equation}
that is solved by 
\begin{equation}
    r_t = \sqrt{c +bt +at^2},
\end{equation}
where $a,b,c$ satisfy 
\begin{equation}
    ac -\frac{1}{4}b^2 = C >0.
\end{equation}
Finally imposing boundary conditions on $r_t$  as in \ref{lem:polar_bc} gives
\begin{align}
     1 = \sqrt{c},\quad 1 = \sqrt{c +b +a}.
\end{align}
We solve for $c$ and $a$, and get 
\begin{align}
    & r_t = \sqrt{1 -bt +bt^2}\\
    & C = b -\frac{1}{4}b^2,
\end{align}
where the positivity condition on $C$ gives $b \in [0,4]$. To get the equation for $\dot{\theta}_t$ we substitute the solution for $r_t$  and $C$  in \eqref{eq:el_constant_of_motion}, that gives
\begin{equation}
    \dot{\theta}_t = \frac{1}{1 -bt + bt^2} \sqrt{\frac{b-\frac{1}{4}b^2}{\gamma(\theta_t)}}.
\end{equation}
\end{proof}

\subsection{ Boundary condition of polar coordinates }
\begin{lemma}\label{lem:polar_bc}
For $r_t \in [0,\infty],\ \theta_t \in [0, \frac{\pi}{2}]$ the boundary conditions on $r_t$ and $\theta_t$ are
\begin{align}
    & r_0 = 1, & r_1 = 1,\\
    & \theta_0 = 0, & \theta_1 = \frac{\pi}{2}.
\end{align}
\end{lemma}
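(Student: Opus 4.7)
The plan is to directly substitute the boundary conditions from the definition of $\gA$ in \eqref{e:A} into the polar change of variables \eqref{e:polar_change} and read off the implied boundary values for $r_t$ and $\theta_t$ in the domain specified by \eqref{e:B}.

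First I would handle the $t=0$ endpoint. The definition of $\gA$ requires $a_0=0$ and $m_0=1$, so \eqref{e:polar_change} yields the system $r_0\sin(\theta_0)=0$ and $r_0\cos(\theta_0)=1$. The second equation forces $r_0\neq 0$, so the first gives $\sin(\theta_0)=0$; since $\theta_0\in[0,\pi/2]$, the unique solution is $\theta_0=0$, and then the second equation collapses to $r_0=1$.

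The $t=1$ endpoint is symmetric: the conditions $a_1=1$ and $m_1=0$ give $r_1\sin(\theta_1)=1$ and $r_1\cos(\theta_1)=0$. The first equation forces $r_1\neq 0$, so $\cos(\theta_1)=0$, which on $[0,\pi/2]$ means $\theta_1=\pi/2$, and the first equation then yields $r_1=1$.

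I don't anticipate any real obstacle here: this is essentially just a bijection check between the Cartesian-style parametrization $(a_t,m_t)$ and its polar representative $(r_t,\theta_t)$ at the boundary. The only mild point worth flagging is that the polar parametrization is degenerate at $r_t=0$, so one must verify non-degeneracy at the endpoints (which is automatic because one of the two Cartesian coordinates equals $1$ at each endpoint), ensuring that the angle $\theta_t$ is well-defined there.
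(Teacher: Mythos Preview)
Your proposal is correct and follows essentially the same approach as the paper: substitute the endpoint conditions $a_0=0,m_0=1,a_1=1,m_1=0$ into the polar change of variables, use the nonvanishing of one coordinate at each endpoint to rule out $r_t=0$, deduce the angle, and then read off $r_t=1$. The paper's argument is identical in structure; your write-up is, if anything, slightly more explicit about uniqueness of the angle on $[0,\pi/2]$.
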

\begin{proof}[Proof of lemma \ref{lem:polar_bc}.]
    The values of $m_t, a_t$ at $t=0$ and $t=1$ are
\begin{align}
    & m_0 = 1, & m_1 = 0,\\
    & a_0 = 0, & a_1 = 1.
\end{align}
The relation between $r_t,\ \theta_t$ and $a_t,\ m_t$ is defined in \eqref{e:polar_change}, that is
\begin{equation}
    \begin{bmatrix}
    a_t \\ m_t
    \end{bmatrix} = 
    r_t \begin{bmatrix}
    \sin(\theta_t) \\ \cos(\theta_t)
    \end{bmatrix}.
\end{equation}
Substituting $t=0$ and $t=1$ gives
\begin{align}
    & 1 = r_0\cos(\theta_0), & 0 = r_1\cos(\theta_1),\\
    & 0 = r_0\sin(\theta_0), & 1 = r_1\sin(\theta_1).
\end{align}
Since neither $r_0=0$ is a solution nor $r_1=0$, it must be that
\begin{align}
    & \theta_0 = 0, & \theta_1 = \frac{\pi}{2}.
\end{align}
Plugging this back gives the boundary conditions also for $r_0$ and $r_1$.
\end{proof}

\newpage
\subsection{Data separation in high dimensions}
\label{aa:data_separation}
\begin{reptheorem}{thm:ke_squeeze}
Let $q$ be an arbitrary normalized finite data distribution. Assume all $(a_t,m_t)\in\gA$ in \eqref{e:A} satisfy: (i) $\frac{a_t}{m_t}$ is strictly monotonically increasing; and (ii) $a_t,m_t$ have uniform bounded Sobolev $W^{1,\infty}$ norm, \ie,  $\norm{a_t}_{1,\infty},\norm{m_t}_{1,\infty}\leq M$. Then, for all $(a_t,m_t)\in \gA$ 
\begin{equation}
    \gE_c(a_t,m_t) - 6M^2\frac{n}{\sqrt{d}}\le \gE(a_t,m_t)  \le \gE_c(a_t,m_t).
\end{equation}
\end{reptheorem}
\begin{proof}[Proof of theorem \ref{thm:ke_squeeze}]
    The KE given in \eqref{e:ke_explicit} is
    \begin{equation}
         \gE(a_t,m_t) =\gE_c(a_t,m_t) - \int_0^1 \beta_t^2 \brac{1-\lambda\parr{\frac{a_t}{m_t}}}dt.
    \end{equation}
    The fact that $\lambda$ is bounded above by $1$, shown in \eqref{e:lam_inequlity_above}, gives
    \begin{equation}
        \gE(a_t,m_t) \le \gE_c(a_t,m_t).
    \end{equation}
    For the second inequality, remember that $\beta_t$ defined in \eqref{e:def_beta_alpha}, is
    \begin{equation}
        \beta_t = \dot{a}_t - a_t\frac{\dot{m}_t}{m_t}.
    \end{equation}
    Further note that 
    \begin{equation}
        \frac{d}{dt}\parr{\frac{a_t}{m_t}} = \frac{\dot{a}_t}{m_t} - a_t\frac{\dot{m}_t}{m_t^2} = \frac{\beta_t}{m_t}. 
    \end{equation}
$\frac{a_t}{m_t}$ is strictly monotonic increasing, $\frac{d}{dt}\frac{a_t}{m_t}>0$. Hence both $\frac{\beta_t}{m_t} > 0$ and $m_t\beta_t > 0$ for every $t\in(0,1)$. Furthermore, $a_t,m_t$ have uniform bounded Sobolev $W^{1,\infty}([0,1])$ norm by $M>0$ and thus
\begin{equation}
    0 < m_t\beta_t = \dot{a}_tm_t - a_t\dot{m}_t \le 2M^2.
\end{equation}
The gap between the CKE and KE is
\begin{align}
    \gE_c(a_t,m_t) - \gE(a_t,m_t) &= \int_0^1 \beta_t^2 \brac{1-\lambda\parr{\frac{a_t}{m_t}}}dt\\
    & =  \int_0^1 (m_t\beta_t)\parr{\frac{\beta_t}{m_t}} \brac{1-\lambda\parr{\frac{a_t}{m_t}}}dt\\
    & \le 2M^2 \int_0^1 \frac{d}{dt}\parr{\frac{a_t}{m_t}} \brac{1-\lambda\parr{\frac{a_t}{m_t}}}dt.
\end{align}
Since $\frac{a_t}{m_t}$ is strictly monotonic increasing, we can make the change of variable $s=\frac{a_t}{m_t}$ and apply theorem \ref{thm:lambda_bound} that gives
\begin{align}
    2M^2 \int_0^1 \frac{d}{dt}\parr{\frac{a_t}{m_t}} \brac{1-\lambda\parr{\frac{a_t}{m_t}}}dt & = 2M^2 \int_0^{\infty} \parr{1-\lambda\parr{s}}ds\\
    & \le 6M^2\frac{n}{\sqrt{d}}.
\end{align}
\end{proof}

\begin{reptheorem}{thm:lambda_bound}
Let $q$ be an arbitrary normalized finite data distribution.
Then,
\begin{equation}
    \int_{0}^{\infty}\abs{1-\lambda(s)}ds \le \frac{3n}{\sqrt{d}}.
\end{equation}
\end{reptheorem}
\begin{proof} [Proof of theoren \ref{thm:lambda_bound}.]
    In the proof we will use Bayes' Rule multiple times. For convenience, we state it explicitly for $\rho_s(x_i|x)$:
    \begin{equation}
        \rho_s(x_i|x) = \frac{\rho_s(x|x_i)q(x_i)}{\rho_s(x)}.
    \end{equation}
    To bound $\lambda(s)$ from above, we apply Jensen inequality
    \begin{align}
        \lambda(s) &= \frac{1}{dn}\sum_{j=1}^n\E_{\rho_s(x|x_j)}\norm{\sum_{i=1}^n x_i\rho_s(x_i|x)}^2\\
        &\le \frac{1}{dn}\sum_{j=1}^n\E_{\rho_s(x|x_j)}\sum_{i=1}^n\norm{x_i}^2\rho_s(x_i|x)\\
        &= \frac{1}{dn}\sum_{i=1}^n\norm{x_i}^2\sum_{j=1}^n\E_{\rho_s(x|x_j)}\rho_s(x_i|x)\\
        &= \frac{1}{dn}\sum_{i=1}^n\norm{x_i}^2\sum_{j=1}^n\E_{\rho_s(x|x_j)}\frac{\rho_s(x|x_i)q(x_i)}{\rho_s(x)}\\
        &= \frac{1}{dn}\sum_{i=1}^n\norm{x_i}^2\sum_{j=1}^n\E_{\rho_s(x|x_j)}\frac{1}{n}\frac{\rho_s(x|x_i)}{\rho_s(x)}\\
        &= \frac{1}{dn}\sum_{i=1}^n\norm{x_i}^2\E_{\rho_s(x)}\frac{\rho_s(x|x_i)}{\rho_s(x)}\\
        &= \frac{1}{dn}\sum_{i=1}^n\norm{x_i}^2\E_{\rho_s(x|x_i)}1\\
        &= \frac{1}{dn}\sum_{i=1}^n\norm{x_i}^2\\
        &= 1.\label{e:lam_inequlity_above}
    \end{align}
    To bound $\lambda(s)$ from below, we first rearrange the expression of $\lambda(s)$
\begin{align}
    \lambda(s) &= \frac{1}{dn}\sum_{j=1}^n\E_{\rho_s(x|x_j)}\norm{\sum_{i=1}^n x_i\rho_s(x_i|x)}^2\\
    &= \frac{1}{dn}\sum_{j=1}^n\E_{\rho_s(x|x_j)}\sum_{i=1}^n \sum_{l=1}^n x_i \cdot x_l\rho_s(x_i|x)\rho_s(x_l|x)\\
    &= \frac{1}{d}\E_{\rho_s(x)}\sum_{i=1}^n \sum_{l=1}^n x_i \cdot x_l\rho_s(x_i|x)\rho_s(x_l|x)\\
    &= \frac{1}{d}\sum_{i=1}^n \sum_{l=1}^n x_i \cdot x_l\E_{\rho_s(x)}\rho_s(x_i|x)\rho_s(x_l|x)\\
    &= \frac{1}{d}\sum_{i=1}^n \sum_{l=1}^n x_i \cdot x_l\E_{\rho_s(x)}\frac{\rho_s(x|x_i)q(x_i)}{\rho_s(x)}\rho_s(x_l|x)\\
    &= \frac{1}{dn}\sum_{i=1}^n \sum_{l=1}^n x_i \cdot x_l\E_{\rho_s(x|x_i)}\rho_s(x_l|x)\\
    &= \frac{1}{dn}\sum_{i=1}^n \sum_{l=1}^n x_i \cdot (x_l-x_i+x_i)\E_{\rho_s(x|x_i)}\rho_s(x_l|x)\\ \label{aeq:first_term}
    &= \frac{1}{dn}\sum_{i=1}^n\sum_{l=1}^n \norm{x_i}^2 \E_{\rho_s(x|x_i)}\rho_s(x_l|x) + \frac{1}{dn}\sum_{i=1}^n \sum_{l=1}^n x_i \cdot (x_l-x_i)\E_{\rho_s(x|x_i)}\rho_s(x_l|x)
\end{align}
Remembering that $\rho_s(x_i|x)$ is a discrete probability density on $x_i,\ i\in [n]$, and the normalization as in \eqref{e:normalization_discrete}, the first term in \ref{aeq:first_term} is
\begin{align}
    \frac{1}{dn}\sum_{i=1}^n\sum_{l=1}^n \norm{x_i}^2 \E_{\rho_s(x|x_i)}\rho_s(x_l|x) & = \frac{1}{dn}\sum_{i=1}^n \norm{x_i}^2 \E_{\rho_s(x|x_i)}\sum_{l=1}^n\rho_s(x_l|x)\\
    & = \frac{1}{dn}\sum_{i=1}^n \norm{x_i}^2 \E_{\rho_s(x|x_i)}1\\
    & = \frac{1}{dn}\sum_{i=1}^n \norm{x_i}^2\\
    & =1.
\end{align}
We substitute back to the expression of $\lambda(s)$ and apply Cauchy–Schwarz inequality
\begin{align}
    \lambda(s) &= 1 + \frac{1}{dn}\sum_{i=1}^n \sum_{\substack{l=1 \\ l\neq i}}^n x_i \cdot (x_l-x_i)\E_{\rho_s(x|x_i)}\rho_s(x_l|x)\\
    & \ge 1 - \frac{1}{dn}\sum_{i=1}^n \sum_{\substack{l=1 \\ l\neq i}}^n \norm{x_i} \norm{x_l-x_i}\E_{\rho_s(x|x_i)}\rho_s(x_l|x).\label{e:lam_inequlity_below}
\end{align}
Applying the inequalities in equations \ref{e:lam_inequlity_above}, \ref{e:lam_inequlity_below}  and lemma \ref{lem:key_lem_new} gives
\begin{align}
    \int_0^{\infty} \abs{1-\lambda(s)}ds &= \int_0^{\infty} \parr{1-\lambda\parr{s}}ds\\
    & \le \int_0^{\infty}\frac{1}{dn}\sum_{i=1}^n \sum_{\substack{l=1 \\ l\neq i}}^n \norm{x_i} \norm{x_l-x_i}\E_{\rho_s(x|x_i)}\rho_s(x_l|x)ds\\
    & \le \frac{1}{dn}\sum_{i=1}^n \sum_{\substack{l=1 \\ l\neq i}}^n \norm{x_i}\int_0^{\infty} \norm{x_l-x_i}\eta\parr{s\norm{x_l-x_i}}ds\\
    & = \frac{1}{dn}\sum_{i=1}^n \sum_{\substack{l=1 \\ l\neq i}}^n \norm{x_i}\int_0^{\infty}\eta(t)dt\\
    & \le \frac{3}{dn}\sum_{i=1}^n \sum_{\substack{l=1 \\ l\neq i}}^n \norm{x_i}\\
    & \le \frac{3n}{\sqrt{d}},
\end{align}
where in the last equality we use Jensen inequality and \eqref{e:q_var} to bound the sum of norms, $\norm{x_i}$, as shown below
\begin{align}
    1 = \frac{1}{dn}\sum_{i=1}^n\norm{x_i}^2 \ge \frac{1}{d}\parr{\frac{1}{n}\sum_{i=1}^n\norm{x_i}}^2.
\end{align}
\end{proof}
\newpage
\begin{replemma}{lem:key_lem_new}
    Let $q$ be an arbitrary normalized finite data distribution. Then for every $s>0$,
    \begin{align}
           \E_{\rho_s(x|x_j)}\rho_s(x_i|x) \le \eta\parr{s\norm{x_i-x_j}},
    \end{align}
    where $\eta(t)$ is integrable in $[0,\infty)$ and  
    \begin{equation}
        \int_0^{\infty}\eta(t) \le 3.
    \end{equation}
\end{replemma}
\begin{proof}[Proof of lemma \ref{lem:key_lem_new}]
 Remembering that $p(x)=\gN(0,I)$,
\begin{align}
    \E_{\rho_s(x|x_j)}\rho_s(x_i|x) &= \E_{\rho_s(x|x_j)}\frac{e^{-\frac{s^2}{2}\norm{x - x_i}^2}}{\sum_{l=1}^ne^{-\frac{s^2}{2}\norm{x - x_l}^2}}\\
    &= \E_{p(x)}\frac{e^{-\frac{s^2}{2}\norm{xs^{-1} +x_j - x_i}^2}}{\sum_{l=1}^ne^{-\frac{s^2}{2}\norm{xs^{-1} +x_j - x_l}^2}}\\
    & \le \E_{p(x)}\frac{e^{-\frac{s^2}{2}\norm{xs^{-1} +x_j - x_i}^2}}{e^{-\frac{s^2}{2}\norm{xs^{-1} +x_j - x_i}^2} +e^{-\frac{s^2}{2}\norm{xs^{-1}}^2}}\\
    & = \E_{p(x)}\frac{1}{1 +e^{\frac{s^2}{2}\norm{x_i - x_j}^2-s\parr{x_i - x_j}\cdot x}}
\end{align}
Since $x \sim p(x) = \gN\parr{0,I}$ is $O(d)$ invariant, the last term depends only on the norm of $x_i - x_j$. That is, we denote by $z \in \R$ the component of $x$ in the direction of $x_i -x_j$
\begin{equation}
    z = \frac{x_i -x_j}{\norm{x_i -x_j}} \cdot x
\end{equation}
So $z \sim \gN(0, 1)$ and 
\begin{align}
    \E_{p(x)}\frac{1}{1 +e^{\frac{s^2}{2}\norm{x_i - x_j}^2-s\parr{x_i - x_j}\cdot x}} &= E_{\gN(z|0,1)}\frac{1}{1 + e^{\frac{s^2}{2}\norm{x_i -x_j}^2 -s\norm{x_i - x_j}z} }\\
    \\&= E_{\gN(z|0,1)}\frac{1}{1 + e^{\frac{1}{2}\parr{s\norm{x_i -x_j}}^2 -\parr{s\norm{x_i - x_j}}z} }
\end{align}
And we define $\eta(t)$ to be 
\begin{equation}
    \eta(t) = E_{\gN(z|0,1)}\frac{1}{1 + e^{\frac{t^2}{2} -tz} }.
\end{equation}
So it is left to show that $\eta(t)$ is integrable in $[0,\infty)$ and bound its integral. For every $t>0$
\begin{align}
    \eta(t) &= E_{\gN(z|0,1)}\frac{1}{1 + e^{\frac{t^2}{2} -tz} }\\
    &= \frac{1}{\sqrt{2\pi}}\int_{-\infty}^{\infty}\frac{e^{-\frac{z^2}{2}}dz}{1 + e^{\frac{t^2}{2} -tz} }\\
    &= \frac{1}{\sqrt{2\pi}}\int_{-\infty}^{\infty}\frac{dz}{e^{\frac{z^2}{2}} + e^{\frac{1}{2}(z-t)^2}}\\
    &= \frac{1}{\sqrt{2\pi}}\int_{-\infty}^{\infty}\frac{dz}{e^{\frac{1}{2}\parr{z+\frac{t}{2}}^2} + e^{\frac{1}{2}\parr{z-\frac{t}{2}}^2}}\\
    &= \sqrt{\frac{2}{\pi}}\int_{0}^{\infty}\frac{dz}{e^{\frac{1}{2}\parr{z+\frac{t}{2}}^2} + e^{\frac{1}{2}\parr{z-\frac{t}{2}}^2}}\\
    & \le \sqrt{\frac{2}{\pi}}\int_{0}^{\infty}e^{-\frac{1}{2}\parr{z+\frac{t}{2}}^2}dz\\
    &= \sqrt{\frac{2}{\pi}}\int_{\frac{t}{2}}^{\infty}e^{-\frac{z^2}{2}}dz,
\end{align}
where in the third to last equality we used the fact that the integrand is symmetric w.r.t $z\too-z$. Now on the one hand, for every $t>0$
\begin{align}
    \sqrt{\frac{2}{\pi}}\int_{\frac{t}{2}}^{\infty}e^{-\frac{z^2}{2}}dz & \le \sqrt{\frac{2}{\pi}}\int_{0}^{\infty}e^{-\frac{z^2}{2}}dz = 1\label{e:eta_bound1}
\end{align}
On the other hand, $\frac{2z}{t}>1$ for every $z>\frac{t}{2}$. Hence
\begin{align}
    \sqrt{\frac{2}{\pi}}\int_{\frac{t}{2}}^{\infty}e^{-\frac{z^2}{2}}dz & \le \sqrt{\frac{2}{\pi}}\int_{\frac{t}{2}}^{\infty}\frac{2z}{t}e^{-\frac{z^2}{2}}dz\\
    & = \sqrt{\frac{2}{\pi}}\int_{\frac{t^2}{4}}^{\infty}\frac{1}{t}e^{-\frac{z'}{2}}dz'\\
    & = \sqrt{\frac{2}{\pi}}\frac{1}{t}e^{-\frac{t^2}{8}}\label{e:eta_bound2}
\end{align}
Combining the two equations \ref{e:eta_bound1} and \ref{e:eta_bound2}, we get that for every $t>0$
\begin{equation}
    \eta(t) \le \min\set{1, \sqrt{\frac{2}{\pi}}\frac{1}{t}e^{-\frac{t^2}{8}}},
\end{equation}
which is integrable in $[0,\infty)$. Furthermore,
\begin{align}
    \int_0^{\infty} \eta(t)dt &= \int_0^{1} \eta(t)dt +\int_1^{\infty} \eta(t)dt\\
    &\le \int_0^{1}dt +\int_1^{\infty} \sqrt{\frac{2}{\pi}}\frac{1}{t}e^{-\frac{t^2}{8}}dt\\
    & \le 1 +\int_0^{\infty} \sqrt{\frac{2}{\pi}}e^{-\frac{t^2}{8}}dt\\
    &=3
\end{align}
\end{proof}

\newpage
\section{Further implementation details}
\label{a:implementation-details}
We model $\theta_t$ using the following architecture 
\begin{equation}
    \theta_t =\frac{\pi}{2}\frac{\abs{m(t)- m(0)}}{\abs{m(1)- m(0)}}
\end{equation}
where $m$ is an MLP defined as 
\begin{equation}
    \phi(t) = \texttt{sigmoid}\Big(L_1(t) +L_3\parr{2 \texttt{sigmoid}\parr{L_2(t)} -1}\Big)
\end{equation}
and $L_i$, $i\in [3]$, are linear layers: $L_1:\Real\too \Real$; $L_2:\Real\too\Real^2$; and $L_3:\Real^2\too \Real$. In total the model for $\theta_t$ uses $10$ learnable parameters. We optimize the KE using ADAM optimizer and Cosine Annealing scheduler. 

For the CNF training in Section \ref{ss:FM_with_KO} we model $v_t(x)$ as follows: (i) For the 2D data it is an MLP with $5$ layers consisting of $512$ neurons in each layer. (ii) For the image datasets (CIFAR, ImageNet-32), we used U-Net architecture as in \citep{dhariwal2021diffusion}, where the particular architecture hyper-parameters are detailed in Table \ref{tab:hyper-params}. In this table we also provide training hyper-parameters. All baselines (IS, DDPM) are implemented in exactly the same setting of architecture and training hyper-parameters. 

\begin{table}
\centering
\section{Additional tables}
\begin{tabular}{l c c }
\toprule
 & CIFAR10  & ImageNet-32  \\
\midrule
Channels & 256 & 256   \\
Depth & 2  & 3  \\
Channels multiple & 1,2,2,2 & 1,2,2,2  \\
Heads & 4 & 4  \\
Heads Channels & 64 & 64  \\
Attention resolution & 16 & 16,8 \\
Dropout & 0.1 & 0.0  \\
Effective Batch size & 256 & 1024  \\
GPUs & 2 & 4  \\
Epochs & 2000 & 400  \\
Iterations & 392k & 500k   \\
Learning Rate & 5e-4 & 1e-4  \\
Learning Rate Scheduler & Polynomial Decay & Polynomial Decay\\
Warmup Steps & 19.6k & 20k  \\
\bottomrule
\end{tabular}
\caption{Hyper-parameters used for training. }
\label{tab:hyper-params}
\end{table}

\begin{table}\centering
\setlength{\tabcolsep}{2.5pt}
\begin{tabular}{@{} l r r r r r r r r  @{}}\toprule
  & \multicolumn{3}{c}{\bf CIFAR-10} &
  & \multicolumn{3}{c}{\bf ImageNet 32$\times$32}  \\
\cmidrule(lr){2-4} \cmidrule(lr){5-8} 
Model & {$K$=1} & {$K$=20} & {$K$=50}
& & {$K$=1} & {$K$=5} & {$K$=15} \\
\cmidrule(r){1-1} \cmidrule(lr){2-4} \cmidrule(lr){5-8} 
\textit{\small Ablation}\\
\;\; DDPM~{\tiny\citep{ho2020denoising}} &
3.23 & 3.13 & 3.11 & & 
3.69 & 3.64 & 3.61 \\
\;\; SI~{\tiny\citep{albergo2022si}}  & 
3.13 & 3.02 & 3.00 & & 
3.67 & 3.61 & 3.58 \\

\cmidrule(r){1-1} \cmidrule(lr){2-4} \cmidrule(lr){5-8}
\;\; FM \textsuperscript{w}/ OT~{\tiny\citep{lipman2022flow}} & 
3.10 & 3.00 & 2.98 & & 
3.69 & 3.64 & 3.60 \\
\;\; FM \textsuperscript{w}/ KO & 
3.10 & 3.00 & 2.97 & & 
3.66 & 3.60 & 3.57 \\

\bottomrule
\end{tabular}
\caption{BPD results for different uniform dequantization samples $k$, following the protocol of \cite{lipman2022flow}.}
\label{tab:nll_k_results}
\end{table}
\section{Runtime Details} \label{a:runtime}
First, during deployment (model sampling) there is no change to the original algorithm of sampling from CNFs. Second, during training we use the optimal $a_t,m_t$ found by optimizing \eqref{eq:ke_polar_form} and \eqref{eq:lambda_estimator}. The solution $a_t,m_t$ is modeled with a 
-parameter MLP and causes no noticeable change in iteration time compared to Cond-OT. Third, the optimization of $a_t,m_t$ is done once as a preprocessing step and takes less than a minute. Fourth, the most time consuming part of our algorithm is approximating the data separation function $\lambda$ but it is done once per dataset, so can be reused across training sessions as long as the dataset hasn’t changed. The complexity of approximating $\lambda$ can be deduced from \eqref{eq:lambda_estimator} and it is $O(kn^2)$, where $k$ is the number of samples from $p=\gN\parr{0, I}$ and $n$ is the number of samples from $q$. In Table \ref{tab:runtime_lambda} we present the runtime of computing $\lambda$ for $k=1$ and $n=50,000$ on 1 GPU (Quadro RTX 8000) for a number of image datasets of different image sizes.
\begin{table}
\centering
\begin{tabular}{l c }
\toprule
 Dataset & Time [s] \\
\midrule
ImageNet 8 & 30.4   \\
ImageNet 16 & 42.0   \\
ImageNet 32/CIFAR10 & 84.1   \\
ImageNet 64 & 328.6   \\
\bottomrule
\end{tabular}
\caption{ Runtime of computing $\lambda$ for $k=1$ and $n=50,000$ on 1 GPU (Quadro RTX 8000).}
\label{tab:runtime_lambda}
\end{table}

\begin{figure}
    \centering
\section{Additional Figures}\label{a:additional_figures}    \includegraphics[width=\textwidth,height=0.95\textheight,keepaspectratio]{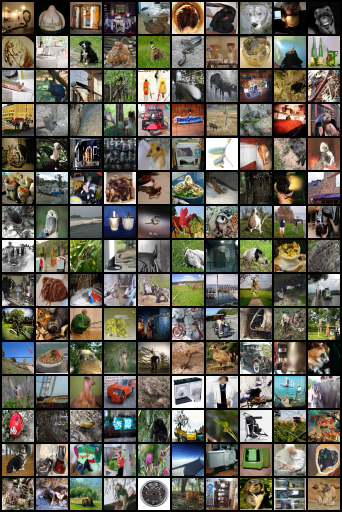}
    \caption{Non-curated unconditional \textit{face-blurred} ImageNet-32 generated images of a CNF model trained with FM-KO. }
    \label{fig:imagenet32_samples}
\end{figure}

\begin{figure*}[h!]
    \centering
    \begin{tabular}{@{\hspace{0pt}}c@{\hspace{8pt}}c@{\hspace{8pt}}c@{\hspace{0pt}}}
         \includegraphics[width=0.30\textwidth]{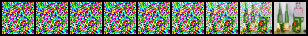} & \includegraphics[width=0.30\textwidth]{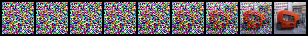} &
         \includegraphics[width=0.30\textwidth]{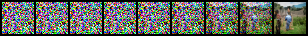} \vspace{-3pt} \\
         & \scriptsize{DDPM} & \vspace{1pt} \\
         \includegraphics[width=0.30\textwidth]{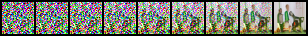} & \includegraphics[width=0.30\textwidth]{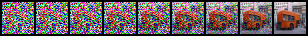} &
         \includegraphics[width=0.30\textwidth]{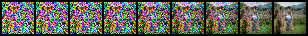} \vspace{-2pt} \\
         & \scriptsize{Stochastic Interpolants} & \vspace{1pt}\\
         \includegraphics[width=0.30\textwidth]{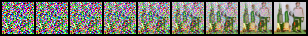} & \includegraphics[width=0.30\textwidth]{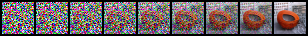} &
         \includegraphics[width=0.30\textwidth]{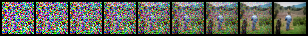} \vspace{-2pt}\\
         & \scriptsize{Flow Matching \textsuperscript{w}/ Cond-OT } & \vspace{1pt} \\
         \includegraphics[width=0.30\textwidth]{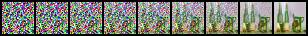} & \includegraphics[width=0.30\textwidth]{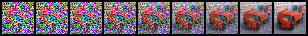} &
         \includegraphics[width=0.30\textwidth]{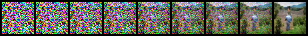} \vspace{-2pt} \\
         & \scriptsize{Flow Matching \textsuperscript{w}/ KO } & \vspace{1pt} \\
         \vspace{-10pt}
    \end{tabular}
    \caption{Sample trajectories of a CNF models trained with DDPM, SI, Cond-OT and kinetic optimal paths, on \textit{face-blurred} ImageNet 32 data.}
    \label{fig:trajectories_imagenet32}
\end{figure*}
\end{document}